\documentclass{article}
\usepackage{iclr2027_conference,times}
\iclrfinalcopy
\makeatletter
\renewcommand{\@maketitle}{\vbox{\hsize\textwidth
  {\LARGE\scshape\@title\par}
  \vskip 12pt
  {\normalsize\raggedright\@author\par}
  \vskip 0.3in minus 0.1in}}
\makeatother
\renewcommand{\headrulewidth}{0pt}
\usepackage[utf8]{inputenc}
\usepackage[T1]{fontenc}
\usepackage[hidelinks]{hyperref}
\usepackage{url}
\usepackage{booktabs}
\usepackage{amsmath,amssymb,amsthm}
\usepackage[nointegrals]{wasysym}
\usepackage{graphicx}
\usepackage{microtype}
\usepackage{xcolor}
\usepackage{listings}
\usepackage{placeins}
\usepackage{colortbl}
\usepackage{tabularx}
\newcolumntype{C}{>{\centering\arraybackslash}X}
\usepackage{tcolorbox}
\usepackage{algpseudocode}
\usepackage{enumitem}

\definecolor{hackred}{HTML}{B03A2E}
\definecolor{idrfblue}{HTML}{1F5F8B}
\definecolor{idrftint}{HTML}{EAF1F7}
\newcommand{\idrfrow}{\rowcolor{idrftint}}
\newenvironment{propbox}{\begin{tcolorbox}[colback=idrftint, colframe=idrfblue!85,
  boxrule=0.6pt, arc=2pt, left=10pt, right=10pt, top=5pt, bottom=5pt,
  before skip=7pt, after skip=7pt]}{\end{tcolorbox}}
\newenvironment{algobox}[1]{\begin{tcolorbox}[colback=white, colframe=idrfblue!85,
  colbacktitle=idrfblue, coltitle=white, boxrule=0.6pt, arc=2pt,
  left=6pt, right=6pt, top=3pt, bottom=3pt, toptitle=2pt, bottomtitle=2pt,
  fonttitle=\small, title={#1}]\small}{\end{tcolorbox}}
\algrenewcommand{\algorithmiccomment}[1]{\hfill{\color{idrfblue}\footnotesize$\triangleright$\ #1}}
\algrenewcommand\algorithmicrequire{\textbf{Input:}}
\algrenewcommand\algorithmicensure{\textbf{Output:}}
\newcommand{\algsection}[1]{\Statex\hspace{-\algorithmicindent}{\color{idrfblue}\textit{#1}}}
\newcommand{\samplesep}[1]{\par\vspace{1.5pt}{\color{#1!35}\hrule height 0.4pt}\vspace{1.5pt}}
\tcbset{samplebox/.style={colback=white, boxrule=0.6pt, arc=1.2pt, left=3pt, right=3pt,
  top=1.5pt, bottom=1.5pt, toptitle=0.6pt, bottomtitle=0.6pt,
  fonttitle=\scriptsize\bfseries, fontupper=\scriptsize\linespread{0.95}\selectfont,
  before skip=0pt, after skip=0pt}}
\graphicspath{{figures/}{images/}}

\newcommand{\E}{\mathbb{E}}
\newcommand{\pth}{p_\theta}
\newcommand{\pref}{p_{\mathrm{ref}}}
\newcommand{\fref}{f_{\mathrm{ref}}}
\newcommand{\fpsi}{f_\psi}

\newcommand{\mask}{\mathbf m}
\newcommand{\Lmdlm}{\mathcal L_{\mathrm{NELBO}}}
\newcommand{\LID}{\mathcal L_{\mathrm{ID}}}
\newcommand{\LR}{\mathcal L_{\mathrm{R}}}

\newcommand{\KL}[2]{D_{\mathrm{KL}}\!\left(#1\,\|\,#2\right)}
\newcommand{\wt}{w_t}
\newtheorem{proposition}{Proposition}
\newtheorem{corollary}{Corollary}

\definecolor{idrfcomment}{RGB}{39,128,121}
\definecolor{idrfkeyword}{RGB}{64,72,190}
\definecolor{idrffunc}{RGB}{186,55,118}
\definecolor{idrfstring}{RGB}{170,58,54}

\lstdefinestyle{idrf}{
  basicstyle=\scriptsize\ttfamily,
  commentstyle=\color{idrfcomment},
  stringstyle=\color{idrfstring},
  keywordstyle=[1]\color{idrfkeyword}\bfseries,
  keywordstyle=[2]\color{idrffunc},
  morekeywords=[1]{if, else, for, in, with},
  morekeywords=[2]{backsim,trajectorysim,probs,sample_state,update,
                   L_seq,ID_loss,reward_loss,rloo_advantage,no_grad},
  comment=[l]{\#},
  morestring=[b]",
  mathescape=true,
  columns=fullflexible,
  keepspaces=true,
  showstringspaces=false,
  xleftmargin=0.4em,
  aboveskip=0.25em,
  belowskip=0.15em,
}
\newcounter{idrfalgorithm}

\title{IDRF: Inverse-Distilled Reward Fine-tuning\\of Masked Discrete Diffusion Models}
\author{%
\textbf{Vladislav Gromadskii}\textsuperscript{1}\qquad
\textbf{David Li}\textsuperscript{2}\qquad
\textbf{Samson Gourevitch}\textsuperscript{3}\\[4pt]
\textbf{Yazid Janati}\textsuperscript{2,4}\quad
\textbf{Eric Moulines}\textsuperscript{2,5}\quad
\textbf{Maxim Panov}\textsuperscript{2}\quad
\textbf{Alexander Korotin}\textsuperscript{1,6}\\[8pt]
{\small
\textsuperscript{1}Applied AI Institute, Moscow, Russia\quad
\textsuperscript{2}MBZUAI\\[2pt]
\textsuperscript{3}CMAP, \'{E}cole Polytechnique\quad
\textsuperscript{4}Institute of Foundation Models\\[2pt]
\textsuperscript{5}LRE, EPITA\quad
\textsuperscript{6}AXXX, Moscow, Russia\\[4pt]
\href{mailto:wladgrm@gmail.com}{\texttt{wladgrm@gmail.com}},
\href{mailto:iamalexkorotin@gmail.com}{\texttt{iamalexkorotin@gmail.com}}}%
}
\hypersetup{%
  pdftitle={IDRF: Inverse-Distilled Reward Fine-tuning of Masked Discrete Diffusion Models},
  pdfauthor={Vladislav Gromadskii, David Li, Samson Gourevitch, Yazid Janati, Eric Moulines, Maxim Panov, Alexander Korotin}%
}

\begin{document}
\maketitle

\begin{abstract}
Masked discrete diffusion models offer a promising alternative to autoregressive generation, but iterative sampling can be costly, and intractable sequence likelihoods complicate reward fine-tuning. We introduce \textbf{IDRF}, a framework for reward fine-tuning of few-step masked discrete diffusion generators. Starting from a standard reverse-KL-regularized objective, IDRF replaces the intractable sequence-level KL penalty with inverse-distillation regularization. With an optimal auxiliary denoiser, we prove that the population inverse-distillation loss upper-bounds the sequence-level KL divergence to the reference distribution. IDRF optimizes a trajectory-based surrogate of this loss without reference-model rollouts, so the student keeps its own few-step sampler. We view few-step generation as a finite-horizon Markov decision process and optimize reward with a clipped policy-gradient objective over the student's trajectories. Across DNA, image, and text generation, IDRF achieves high reward with up to \(32\times\) fewer denoising steps than the reference while mitigating reward hacking and preserving sample quality.
\end{abstract}

\begin{figure}[!ht]
  \centering
  \includegraphics[width=\linewidth]{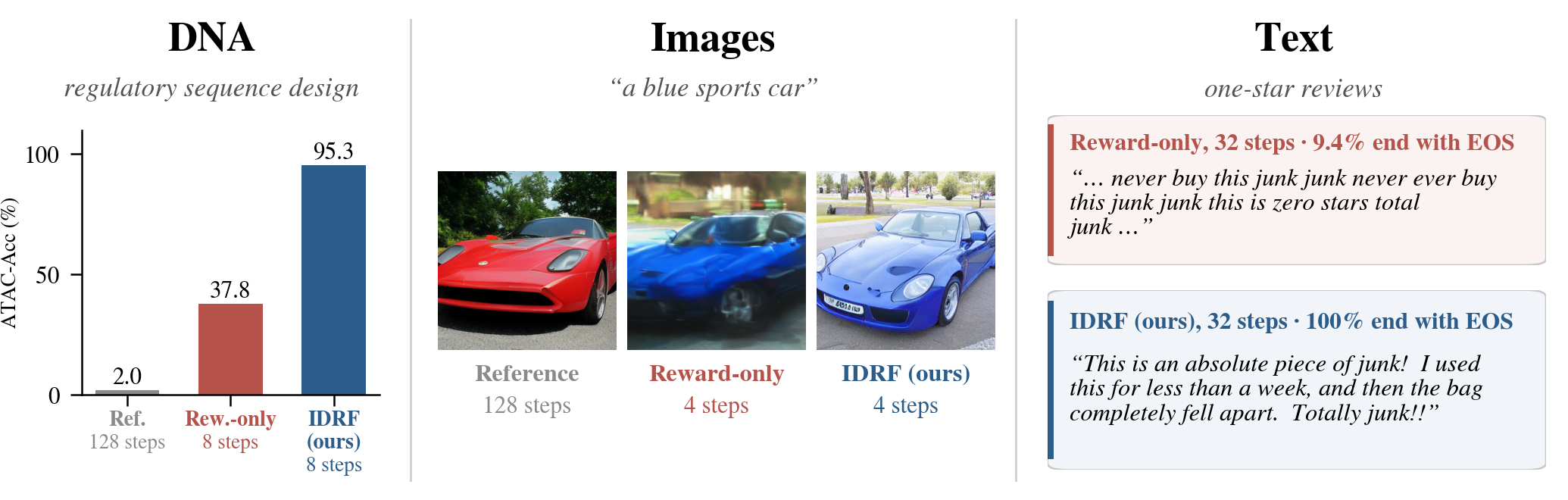}
  \caption{\textbf{IDRF (ours) across DNA, images, and text.} Reward-only fine-tuning and IDRF sample with the same budget of $8$, $4$, and $32$ steps. Reward-only fine-tuning hacks the reward: DNA lacks accessibility (ATAC-Acc), images blur, and reviews degenerate into repetition that rarely ends with EOS.
IDRF avoids these failures.}
  \label{fig:teaser}
\end{figure}

\section{Introduction}
Masked diffusion models (MDMs) generate sequences by unmasking tokens in parallel, with applications
to language, biological sequences, and discrete image representations
\citep{sahoo2024mdlm,shi2024md4,wang2024drakes}. Reward fine-tuning adapts these models to task
objectives, such as DNA activity, image attributes, or review sentiment. Optimizing reward alone,
however, can destroy properties the reward does not measure (Figure~\ref{fig:teaser}).

KL regularization toward a pretrained reference provides a standard safeguard
\citep{ziegler2019finetuning,korbak2022rl}, but an MDM's sequence likelihood marginalizes over all
denoising trajectories and is intractable. Existing reward fine-tuning methods use likelihood
surrogates and evidence bounds \citep{zhao2025d1,wang2025spg}, or penalize KL between reverse
transitions \citep{wang2024drakes}. The latter requires reference evaluations along the rollout
and ties the regularizer to its sampling path. We seek a tractable sequence-level KL regularizer
that lets the student use its own sampler.

We introduce \textbf{Inverse-Distilled Reward Fine-tuning (IDRF)}, which combines trajectory-based
policy gradients with inverse distillation \citep{li2026idlm}. The regularizer compares the denoising
losses of a frozen reference and a fitted auxiliary on student samples. It requires neither the
reference's likelihood nor its rollouts. Both the reward and the regularizer reuse the student's own
trajectories (Figure~\ref{fig:idrf-overview}), so the regularizer shapes the student's few-step
sampler directly. Reward fine-tuning requires neither pretraining data nor reward gradients. Our \textbf{contributions}
are as follows.
\begin{enumerate}[leftmargin=1.4em,itemsep=1pt,topsep=2pt]
\item With an optimal auxiliary denoiser, we establish \textbf{a sequence-level KL bound} for the population inverse-distillation objective in masked diffusion, valid for any sequence length and any student sampler (\wasyparagraph\,\ref{sec:method-id}).
\item We regularize reward fine-tuning with \textbf{a practical inverse-distillation objective} on the student's own trajectories,
without reference rollouts. The resulting framework supports reward fine-tuning directly in the few-step regime
(\wasyparagraph\,\ref{sec:student}).
\item Across DNA, images, and text, IDRF reaches \textbf{high reward with up to $\boldsymbol{32\times}$ fewer denoising steps} than the reference while preserving properties lost under reward-only fine-tuning
(\wasyparagraph\,\ref{sec:experiments}).
\end{enumerate}

\section{Background}\label{sec:background}
This section reviews the two ingredients of IDRF. \wasyparagraph\,\ref{sec:bg-mdm} introduces masked
diffusion and the NELBO used to pretrain the reference, and \wasyparagraph\,\ref{sec:bg-id} describes
inverse distillation, which IDRF turns into a regularizer for reward fine-tuning.

\subsection{Masked Diffusion}\label{sec:bg-mdm}
Masked diffusion trains a denoiser to recover hidden tokens from their visible context and
generates by filling in a fully masked sequence. We follow the MDLM formulation
\citep{sahoo2024mdlm,shi2024md4}, which ties this prediction task to a variational bound.

Let $x_0\in\mathcal V^L$ be a clean sequence and $\mask\notin\mathcal V$ the mask
symbol. At noise level $t\in[0,1]$, each token is retained with probability
$\alpha_t$ and masked otherwise, independently across positions. The resulting
corruption distribution is
\begin{equation}\label{eq:mdlm-forward}
q_t(x_t\mid x_0)
=\prod_{\ell=1}^L\left[\alpha_t\,\delta_{x_0^\ell}(x_t^\ell)
+(1-\alpha_t)\,\delta_{\mask}(x_t^\ell)\right],
\end{equation}
where $\delta_a$ is a point mass at $a$. The schedule $\alpha_t$ is continuously
differentiable and strictly decreasing, with $\alpha_0=1$ and $\alpha_1=0$.

The denoiser $f^\ell(v\mid x_t)$ predicts the missing token at position $\ell$
from the visible context of $x_t$ and takes no time input. Positions are masked independently
of their values, so the Bayes-optimal prediction depends only on the visible tokens of
$x_t$ and is independent of $t$ \citep{ou2025radd,scope_zheng2025timeagnostic}. Accordingly, time
conditioning has minimal effect in practice \citep{sahoo2024mdlm} and implementations omit
it \citep{ou2025radd}, as do the pretrained models in all our experiments. MDLM trains the
denoiser with the continuous-time negative evidence lower bound (NELBO)
\begin{equation}\label{eq:mdlm}
\Lmdlm(f,p)
=\E_{\substack{x_0\sim p,\,t\sim U(0,1)\\x_t\sim q_t(\cdot\mid x_0)}}
\left[\wt\sum_{\ell:x_t^\ell=\mask}
-\log f^\ell(x_0^\ell\mid x_t)\right],
\qquad
\wt=\frac{-\alpha'_t}{1-\alpha_t}.
\end{equation}
Here $p$ is the distribution supplying clean sequences, the data distribution $p^*$ during
pretraining, so learning to predict missing tokens optimizes a bound on sequence likelihood.

To generate a sample, the model starts from $x_1=\mask^L$. At each reverse step,
some masked positions are selected and their values are drawn from the
denoiser's predictions. MDLM's substitution-based (SUBS) parameterization
keeps revealed tokens fixed and assigns zero probability to predicting
$\mask$ as a clean token. For $0\le s<t\le1$, this gives the reverse transition
\begin{equation}\label{eq:mdlm-reverse}
\begin{aligned}
p_f(x_s\mid x_t)&=\prod_{\ell=1}^L p_f^\ell(x_s^\ell\mid x_t),\\
p_f^\ell(\cdot\mid x_t)&=
\begin{cases}
\delta_{x_t^\ell}, & x_t^\ell\ne\mask,\\[2pt]
\displaystyle\frac{1-\alpha_s}{1-\alpha_t}\,\delta_{\mask}
+\frac{\alpha_s-\alpha_t}{1-\alpha_t}\,f^\ell(\cdot\mid x_t),
& x_t^\ell=\mask.
\end{cases}
\end{aligned}
\end{equation}
Positions are updated independently given $x_t$, but newly revealed tokens condition later
predictions, so the generated tokens depend on one another.

\subsection{Inverse Distillation}\label{sec:bg-id}
IDLM uses inverse distillation to train a fast discrete generator from a
pretrained denoiser \citep{li2026idlm}. Pretraining fits a denoiser to a given
data distribution. Inverse distillation reverses this direction by fixing the
reference denoiser $\fref$ and learning a student distribution $\pth$ on which
its predictions remain optimal. In the ideal setting, a student matching
$p^*$ would produce samples that the reference can denoise as well as any
model fitted specifically to those samples.

A low reference loss alone is not enough to ensure this match. For example,
a student that repeats a single sequence may be easy to denoise despite
covering very little of the data distribution. Inverse distillation accounts
for how predictable the student's samples are by comparing the reference
with the best denoiser for those same samples. This gives the objective
\begin{equation}\label{eq:id}
\LID(\theta)
=\Lmdlm(\fref,\pth)-\min_f\Lmdlm(f,\pth).
\end{equation}
Minimizing this gap encourages the student to produce samples on which
refitting the denoiser offers little improvement over the reference.

This objective also has a distribution-matching guarantee. With a Bayes-optimal reference for
$p^*$ and an exact inner minimum over a sufficiently expressive denoiser family, IDLM shows
\citep[Theorem~3.1]{li2026idlm}
\begin{equation}\label{eq:idlm-marginal-bound}
\LID(\theta)\geq\KL{\pth}{p^*}\geq0,
\end{equation}
so a zero gap means that the student matches the data. IDLM proves this bound for a single token and
for a joint reverse process over whole sequences, and leaves the sequence objective used in practice,
a sum of per-token denoising losses, to future work \citep[App.~D.2]{li2026idlm}.
Proposition~\ref{prop:sequence-kl} proves the bound for its population loss and an arbitrary reference.

In practice, a trainable auxiliary denoiser $\fpsi$ approximates the inner
minimum by learning to reconstruct masked student samples. Training alternates
between fitting the auxiliary and updating the student to reduce the difference
between reference and auxiliary losses. The reference remains fixed throughout. IDRF uses this denoising gap as the regularizer for
reward fine-tuning.
\section{Method}\label{sec:method}
We start from the standard KL-regularized reward objective and use inverse distillation as its
tractable regularizer. Figure~\ref{fig:idrf-overview} illustrates the resulting training framework.

\begin{figure}[t]
  \centering
  \includegraphics[width=\linewidth]{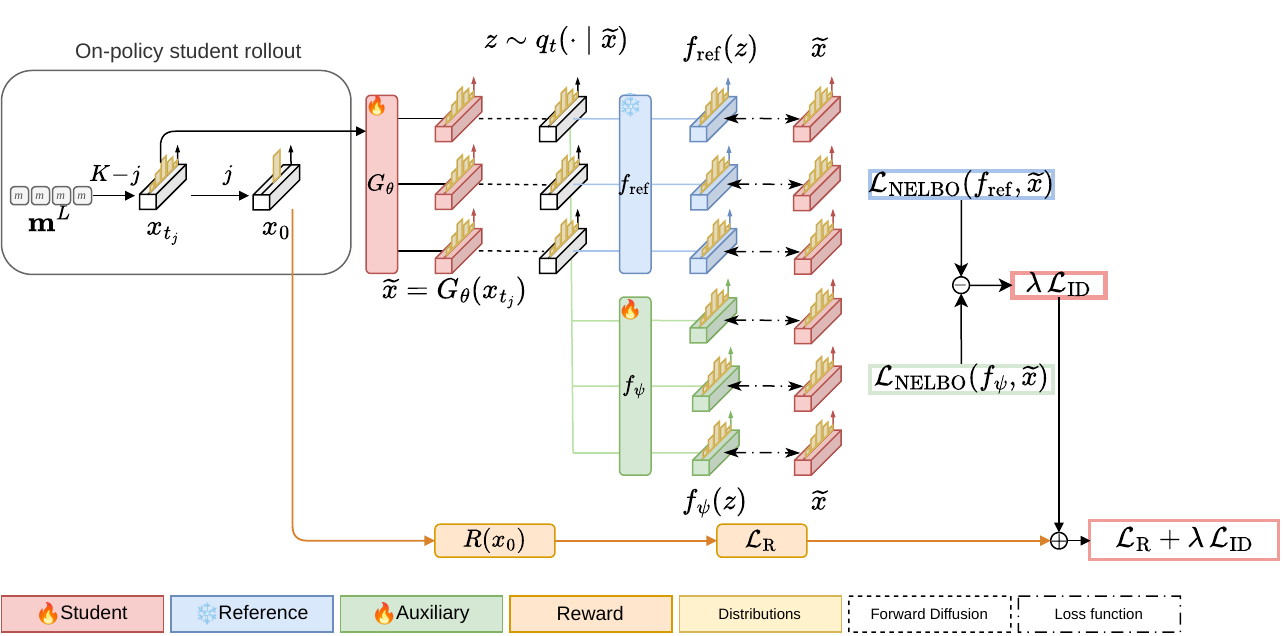}
  \caption{\textbf{IDRF (ours) training framework.} An on-policy rollout of the student from $\mathbf m^L$
  provides an intermediate state $x_{t_j}$ and a completed sample $x_0$. The soft prediction
  $\widetilde x=G_\theta(x_{t_j})$ is freshly corrupted and scored by the frozen reference $\fref$ and
  the trainable auxiliary $\fpsi$, whose NELBO gap gives $\LID$. The reward of $x_0$ gives the
  policy-gradient loss $\LR$, and the student minimizes $\LR+\lambda\LID$.}
  \label{fig:idrf-overview}
\end{figure}

\subsection{KL-regularized reward fine-tuning via inverse distillation}\label{sec:method-id}
Given a terminal reward $R$ and a frozen reference distribution $\pref$, the target objective is
\begin{equation}\label{eq:objective}
\max_\theta\,
\E_{x_0\sim\pth}[R(x_0)]
-
\lambda\KL{\pth}{\pref},
\qquad \lambda>0.
\end{equation}
The endpoint law marginalizes over all denoising trajectories ending at $x_0$
(\wasyparagraph\,\ref{sec:student}), so its log-density is generally intractable. We instead use the inverse-distillation gap in Eq.~\eqref{eq:id}.
The following proposition makes its connection to the KL between full sequences explicit.

\begin{propbox}
\begin{proposition}[Sequence-level KL bound]\label{prop:sequence-kl}
Let $\pth$ be any law on $\mathcal V^L$. Assume independent forward masking with a
continuously differentiable, strictly decreasing schedule with $\alpha_0=1$ and $\alpha_1=0$.
Let $\pref$ be the endpoint law of the continuous-time SUBS reverse process defined by
$\fref$, with unmasking rates $w_t\fref^\ell(v\mid x_t)$.
Assume the reference NELBO is finite and the auxiliary is optimal, that is, its family contains
the Bayes denoiser of $\pth$ on forward-masked states and attains the inner minimum. Then
\begin{equation}\label{eq:sequence-kl-bound}
\LID(\theta)
:=\Lmdlm(\fref,\pth)-\min_\psi\Lmdlm(\fpsi,\pth)
\ \geq\ \KL{\pth}{\pref}.
\end{equation}
Equality holds if the reference is the Bayes denoiser of its own endpoint law.
\end{proposition}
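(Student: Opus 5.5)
We compare path measures of two continuous-time Markov chains and push the comparison down to endpoints with the data-processing inequality.

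\textbf{Step 1: two reverse processes.} Let $\mathbb Q$ be the time reversal of the forward masking process started from $x_0\sim\pth$. Because masking is independent across positions and absorbing, $\mathbb Q$ is itself a SUBS-type unmasking chain. Its rates are $w_t f^{\star,\ell}(v\mid x_t)$, where $f^\star$ is the Bayes denoiser of $\pth$ on forward-masked states, $f^{\star,\ell}(v\mid x_t)=\Pr_{\pth}(x_0^\ell=v\mid x_t)$. The time-independence of $f^\star$ holds as recalled in \wasyparagraph\,\ref{sec:bg-mdm}. Its endpoint marginal at time $0$ is exactly $\pth$.

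Let $\mathbb P$ be the reference SUBS chain with rates $w_t\fref^\ell(v\mid x_t)$ started at $\mask^L$, whose endpoint is $\pref$ by assumption. Both chains start deterministically at $\mask^L$, so the data-processing inequality gives
\[
\KL{\pth}{\pref}\le\KL{\mathbb Q}{\mathbb P}.
\]

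\textbf{Step 2: Girsanov for jump processes.} For CTMCs with common initial law, the path KL is
\[
\KL{\mathbb Q}{\mathbb P}=\E_{\mathbb Q}\int_0^1\sum_{y\ne x_t}\Big[q_t(x_t,y)\log\tfrac{q_t(x_t,y)}{p_t(x_t,y)}-q_t(x_t,y)+p_t(x_t,y)\Big]dt .
\]
Transitions unmask a single position $\ell$ to a token $v$. The total outgoing rate at a masked position is $w_t$ under both chains, since $f^\star$ and $\fref$ are probability vectors over $\mathcal V$ (SUBS assigns no mass to $\mask$). So the $-q+p$ terms cancel, leaving
\[
\E_{t,\,x_t}\Big[w_t\sum_{\ell:x_t^\ell=\mask}\KL{f^{\star,\ell}(\cdot\mid x_t)}{\fref^\ell(\cdot\mid x_t)}\Big],
\]
with $x_t$ distributed as the $\mathbb Q$-marginal, i.e.\ forward-masked $\pth$ samples. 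Writing each KL as cross-entropy minus entropy and using the tower property, $\E[-\log\fref^\ell(x_0^\ell\mid x_t)\mid x_t]$ equals the cross-entropy term. This yields
\[
\KL{\mathbb Q}{\mathbb P}=\Lmdlm(\fref,\pth)-\Lmdlm(f^\star,\pth).
\]
Finally, $f^\star$ minimizes $\Lmdlm(\cdot,\pth)$ pointwise, since conditional expectation minimizes log loss. Under the optimal-auxiliary assumption the inner minimum is attained at $f^\star$, so the right-hand side equals $\LID(\theta)$, which gives Eq.~\eqref{eq:sequence-kl-bound}.

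\textbf{Step 3: equality and technicalities.} If $\fref$ is the Bayes denoiser of $\pref$, then $\mathbb P$ is the time reversal of forward masking from $\pref$. Both $\mathbb Q$ and $\mathbb P$ are then obtained as "endpoint law $\times$ the same forward bridge," so the conditional path laws given $x_0$ coincide. The chain rule for KL then gives $\KL{\mathbb Q}{\mathbb P}=\KL{\pth}{\pref}$.

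\textbf{Main obstacle.} The delicate step is rigor in Step 2, for three reasons:
\begin{itemize}
\item \emph{Time reversal.} We must justify that the reversal of the forward chain has exactly rates $w_t f^\star$; this is a computation of $\Pr(x_s\mid x_t)$ for $s\uparrow t$.
\item \emph{Rate blow-up.} The Girsanov formula must hold despite $w_t\to\infty$ as $t\to0$.
\item \emph{Finiteness.} The log-ratio terms must be integrable.
\end{itemize}
I would handle the last two by working on $[\epsilon,1]$, where the rates are bounded. There I apply the standard finite-rate Girsanov formula together with the finite reference NELBO, which makes the integrand integrable. I would then let $\epsilon\to0$ by monotone convergence, using lower semicontinuity of KL for the endpoint (time-$\epsilon$) marginals, which converge to the time-$0$ laws.
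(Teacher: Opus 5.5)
Your proof is correct, but it takes a different route from the paper.

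\textbf{The paper's route.} The paper never leaves the discrete setting. It realizes the SUBS chain by i.i.d.\ reveal clocks and uses the time-agnostic denoiser to write the endpoint law as a mixture over uniform reveal orders, $p_f=\E_\sigma\pi_{f,\sigma}$. It then rewrites the NELBO as the any-order autoregressive loss $\E_{x,\sigma}[-\log\pi_{f,\sigma}(x)]$. This gives the exact decomposition $\Lmdlm(f,\pth)=H(\pth)+\E_\sigma\KL{\pth}{\pi_{f,\sigma}}$. Since the Bayes denoiser reproduces $\pth$ along every order, $\LID=\E_\sigma\KL{\pth}{\pi_{\fref,\sigma}}$, and convexity of KL in its second argument finishes the bound. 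Equality is immediate because $\pi_{\fref,\sigma}=\pref$ for every $\sigma$.

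\textbf{Your route and what each buys.} You go through path space instead: time reversal, CTMC Girsanov, then data processing. This is the generic argument. It does not use the reveal-order structure specific to absorbing masking, so it would adapt to other CTMC corruptions. The cost is that you must handle the reversal rates, the blow-up of $w_t$ as $t\to0$, and the $\epsilon\to0$ limit with lower semicontinuity. None of these arise in the paper.

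\textbf{The two arguments coincide.} Under both $\mathbb Q$ and $\mathbb P$, each masked position unmasks at total rate $w_t$. So the reveal clocks have the same i.i.d.\ law under both measures. The Radon--Nikodym derivative therefore collapses to $\pth(x_0)/\pi_{\fref,\sigma}(x_0)$, and $\KL{\mathbb Q}{\mathbb P}=\E_\sigma\KL{\pth}{\pi_{\fref,\sigma}}$. Your data-processing step is thus exactly the paper's convexity step. Noticing this would remove the technical obstacles you list, because a path is just $L$ jump times and values with an explicit density.

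\textbf{A minor point.} Splitting the path KL into $\Lmdlm(\fref,\pth)-\Lmdlm(f^\star,\pth)$ requires $\Lmdlm(f^\star,\pth)<\infty$. This follows from Gibbs' inequality and the finite reference NELBO, and you should state it.
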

\end{propbox}

\paragraph{Why the population loss.}
Proposition~\ref{prop:sequence-kl} holds for any sequence length and any student sampler
(Appendix~\ref{app:sequence-kl}). Replacing the KL in Eq.~\eqref{eq:objective} by the population loss
therefore gives a lower bound on the KL-regularized objective and motivates the regularizer of IDRF.

\paragraph{Practical objective.}
IDRF optimizes a practical objective built from the population loss. A trained auxiliary takes the
place of the Bayes denoiser of $\pth$, and the loss is evaluated on the student's predictions
$\widetilde x=G_\theta(x_{t_j})$ at its rollout states $x_{t_j}$, $j\sim\mathcal U\{1,\dots,K\}$:
\begin{equation}\label{eq:idrf}
\min_\theta\max_\psi\;
\LR(\theta)+\lambda\,\E_{j,\,x_{t_j}}\big[
\Lmdlm(\fref,\widetilde x)-\Lmdlm(\fpsi,\widetilde x)\big],
\end{equation}
with $\LR$ and the soft-target NELBO defined in \wasyparagraph\,\ref{sec:reward-term}. In the experiments, $\LID$ and
$\lambda_{\mathrm{ID}}$ denote this term and its weight. With an optimal auxiliary, the expected term
upper-bounds $\KL{\bar q_\theta}{\pref}$ for the mixture $\bar q_\theta$ of the student's one-step completion
laws along its rollouts (Appendix~\ref{app:practical-bound}).

\subsection{The Few-step Generator as an MDP}\label{sec:student}\label{sec:reward-term}
We initialize the student denoiser $G_\theta$ from the pretrained reference. Like the reference,
it maps a partially masked sequence to categorical probabilities over clean tokens at each masked
position and takes no time input. Task conditioning is suppressed in the notation. A $K$-step
generator applies $G_\theta$ on a fixed grid $1=t_K>\cdots>t_0=0$. At step $k$, each masked
position is revealed with probability $b_k=(\alpha_{t_{k-1}}-\alpha_{t_k})/(1-\alpha_{t_k})$ and its
value is drawn from $G_\theta$, while revealed positions are copied. This is the SUBS transition of Eq.~\eqref{eq:mdlm-reverse} on the grid, and
$b_1=1$ ends every episode at a complete sequence $x_0$ after $K$ network evaluations.
Max-confidence decoding instead reveals the most confident masked positions and leaves the rest of
the construction unchanged. We formalize
the generator as the finite-horizon Markov decision process
\begin{equation}\label{eq:mdp}
\left\{\;
\begin{aligned}
&\text{state} && s_k=(k,\,x_{t_k}),\qquad s_K=(K,\,\mask^L),\\
&\text{action} && a_k=x_{t_{k-1}}\in\mathcal A(x_{t_k})
 =\bigl\{y:\ y^\ell=x_{t_k}^\ell\ \text{whenever}\ x_{t_k}^\ell\neq\mask\bigr\},\\
&\text{policy} && \pi_\theta(a_k\mid s_k)
 =\prod_{\ell:\,x_{t_k}^\ell=\mask}\Bigl[(1-b_k)\,\delta_{\mask}(a_k^\ell)
 +b_k\,G_\theta^\ell(a_k^\ell\mid x_{t_k})\Bigr],\\
&\text{transition} && s_{k-1}=(k-1,\,a_k),\\
&\text{reward} && r(s_k,a_k)=\mathbf 1[k=1]\,R(a_k).
\end{aligned}\right.
\end{equation}
The only reward is received at the final step. A trajectory $\tau=(x_{t_K},\ldots,x_{t_0})$
therefore has probability $P_\theta(\tau)=\prod_{k=1}^K\pi_\theta(x_{t_{k-1}}\mid s_k)$, its
endpoint has law $\pth(x_0)=\sum_{\tau:\,\tau_0=x_0}P_\theta(\tau)$, and reward fine-tuning
maximizes the expected return $J_R(\theta)=\E_{\tau\sim P_\theta}[R(x_0)]$ of the $K$-step
sampler.

\begin{figure}[!b]
\refstepcounter{idrfalgorithm}\label{alg:idrf-training}
\begin{algobox}{\textbf{Algorithm \theidrfalgorithm\ \ IDRF training}\hfill\small reward fine-tuning of a $K$-step student}
\begin{algorithmic}[1]
\Require frozen reference $\fref$; reward $R$; weight $\lambda$; steps $K$; group size $M$
\State $G_\theta\gets\fref$,\ \ $\fpsi\gets\fref$ \Comment{zero gap at initialization}
\For{each training iteration}
  \algsection{Auxiliary: fit the student's current distribution}
  \For{each auxiliary update}
    \State $x_{t_j}\gets$ partial rollout of the MDP~\eqref{eq:mdp}, $j\sim\mathcal U\{1,\dots,K\}$ \Comment{no gradients}
    \State $\psi\gets\psi-\eta\,\nabla_\psi\Lmdlm\bigl(\fpsi,\operatorname{sg}[G_\theta(x_{t_j})]\bigr)$ \Comment{student frozen}
  \EndFor
  \algsection{Student: reward plus inverse distillation}
  \State $\theta_{\mathrm{old}}\gets\theta$;\ \ $\tau^{(1)},\ldots,\tau^{(M)}\sim P_{\theta_{\mathrm{old}}}$; record revealed tokens $U_k^{(i)}$ \Comment{full $K$-step rollouts}
  \State $A_i\gets R(x_0^{(i)})-\frac{1}{M-1}\sum_{i'\neq i}R(x_0^{(i')})$ \Comment{Eq.~\eqref{eq:reward-loss}}
  \For{each student update}
    \State $\widetilde x\gets G_\theta(x_{t_j})$ with $j\sim\mathcal U\{1,\dots,K\}$ on each $\tau^{(i)}$
    \State $\mathcal L\gets\LR(\theta)+\lambda\bigl[\Lmdlm(\fref,\widetilde x)-\Lmdlm(\fpsi,\widetilde x)\bigr]$ \Comment{Eqs.~\eqref{eq:reward-loss},~\eqref{eq:idrf}}
    \State $\theta\gets\theta-\eta\,\nabla_\theta\mathcal L$ \Comment{auxiliary frozen}
  \EndFor
\EndFor
\Ensure few-step student $G_\theta$; $\fpsi$ is discarded
\end{algorithmic}
\end{algobox}
\end{figure}

\paragraph{Reward term.}
Transitions are deterministic and $b_k$ does not
depend on $\theta$, so the policy gradient of $J_R$ involves only the tokens revealed at each step,
$U_k=\{\ell:\ x_{t_k}^\ell=\mask\neq x_{t_{k-1}}^\ell\}$, and never differentiates through discrete
draws or through $R$ (Appendix~\ref{app:policy-details}). Over $M$ rollouts generated by
$\theta_{\mathrm{old}}$, we optimize the clipped leave-one-out surrogate \citep{schulman2017ppo,kool2019buy,ahmadian2024back,agrpo2026}
\begin{equation}\label{eq:reward-loss}
\begin{aligned}
 &\LR(\theta)=-\frac1M\sum_{i=1}^M\sum_{k=1}^K\sum_{\ell\in U_k^{(i)}}
 \min\bigl\{\rho_{ik\ell}A_i,\ \operatorname{clip}(\rho_{ik\ell},1\pm\varepsilon)\,A_i\bigr\},\\
 &\rho_{ik\ell}=\frac{G_\theta^\ell(x_0^{(i),\ell}\mid x_{t_k}^{(i)})}
 {G_{\theta_{\mathrm{old}}}^\ell(x_0^{(i),\ell}\mid x_{t_k}^{(i)})},\qquad
 A_i=R(x_0^{(i)})-\frac{1}{M-1}\sum_{i'\ne i}R(x_0^{(i')}).
\end{aligned}
\end{equation}
For ancestral sampling, at $\theta=\theta_{\mathrm{old}}$, $-\nabla_\theta\LR$ is an unbiased estimate of $\nabla_\theta J_R$.
For max-confidence decoding, the recorded reveal sets are held fixed.

\paragraph{Inverse distillation on student states.}
The practical objective in Eq.~\eqref{eq:idrf} reuses the rollouts drawn for the reward. The
prediction $\widetilde x$ serves as a soft target on the simplex \citep{li2026idlm}, so each
cross-entropy in $\Lmdlm$ becomes $-\sum_{v\in\mathcal V}\widetilde x^\ell(v)\log f^\ell(v\mid z)$ for
$f\in\{\fref,\fpsi\}$ and a fresh corruption $z$ whose visible tokens are drawn from $\widetilde x$.
At revealed positions, $\widetilde x$ is the one-hot copied token. Sampling $j$ along the trajectory regularizes the student on the states its sampler visits, using a single auxiliary across depths. In our ablations, on-policy states work best: forward-masking completed
student samples lowers ATAC-Acc from $95.3\%$ to $52.5\%$ in DNA
design (Appendix~\ref{app:dna-ablations}). The regularizer costs one student evaluation plus one
forward pass through each of the reference and auxiliary models. We use a semi-gradient update for the inverse-distillation term: rollout states and sampled corruption contexts are held fixed, and gradients flow only through the student's soft target probabilities.
Algorithm~\ref{alg:idrf-training} summarizes training.

\section{Related Work}\label{sec:related}

\paragraph{Distillation of masked diffusion models.}
Several methods reduce the sampling steps of a pretrained masked diffusion model. SDTT distills
multiple teacher steps into fewer student steps \citep{deschenaux2025sdtt}, and Duo adapts
consistency distillation to uniform-state discrete diffusion \citep{sahoo2025duo}.
Distribution-matching methods instead train a few-step generator to match the teacher's
distribution, through an integral KL objective in DiDi-Instruct \citep{zheng2026didi} or inverse
distillation in IDLM \citep{li2026idlm}. These methods reproduce the teacher, whereas IDRF (ours) uses
inverse distillation to stay close to it while optimizing a reward.

\paragraph{Reward fine-tuning of masked diffusion models.}\label{sec:rw-reward}
To handle intractable sequence likelihoods, d1 and wd1 use denoising likelihood approximations
\citep{zhao2025d1,tang2025wd1}, while SPG uses lower and upper evidence bounds \citep{wang2025spg}.
Beyond reward fine-tuning, TUBE provides a further upper bound on the evidence of masked diffusion
models \citep{ivanov2026tube}.
Alternatively, AGRPO treats denoising as an MDP with tractable stepwise action probabilities
\citep{agrpo2026}, and TraceRL updates policies along recorded inference trajectories
\citep{wang2026tracerl}. DMPO and Discrete Tilt Matching learn reward-tilted distributions
\citep{zhu2026dmpo,chen2026dtm}. DRAKES instead uses differentiable relaxations for biological
sequence design when reward gradients are available \citep{wang2024drakes}.

\paragraph{Joint distillation and fine-tuning.}
In continuous diffusion, Diff-Instruct++ regularizes one-step reward optimization with an integral
KL \citep{luo2024diffinstructpp}, DMDR combines distribution matching with reinforcement learning
\citep{jiang2025dmdr}, and RTDMD uses reward-tilted distribution matching for few-step flow generators
\citep{huang2026rtdmd}. For masked image diffusion, Soft-Di[M]O refines a one-step generator through
soft embeddings. Its adversarial refinement uses real images, and its reward fine-tuning requires
differentiable rewards \citep{zhu2026softdimo}. IDRF (ours) uses inverse distillation
as the regularizer of trajectory-based policy gradients, supports reward fine-tuning directly in the few-step regime, and requires neither pretraining data nor reward gradients.

\section{Experiments}\label{sec:experiments}
We evaluate IDRF on regulatory-DNA design, CLIP-guided image generation, and sentiment steering of review text. Across these tasks, we study its ability to achieve high reward with few sampling steps and preserve sample quality through inverse-distillation regularization.

\subsection{Regulatory-DNA design}\label{sec:experiments-dna}
\paragraph{Setup.}
Following DRAKES \citep{wang2024drakes}, we generate $200$-bp regulatory sequences and
optimize predicted enhancer activity in HepG2 cells. IDRF students use $\lambda_{\mathrm{ID}}=0.2$
and sample in $K=8$ steps with ancestral or max-confidence decoding (Appendix~\ref{app:dna-train}). We report activity
under a held-out oracle (\textbf{Pred-Activity}), the percentage of sequences predicted
accessible in HepG2 (\textbf{ATAC-Acc}), \textbf{$3$-mer correlation} with highly active
natural enhancers, \textbf{Log-Lik} under the pretrained model, and per-sequence nucleotide
\textbf{Entropy} (Appendix~\ref{app:dna-metrics}). We re-evaluate CG \citep{nisonoff2024guidance}, SMC and TDS \citep{wu2023twisted}, CFG \citep{ho2022cfg}, and DRAKES with and without KL regularization using the released DRAKES implementation \citep{wang2024drakes}. We also evaluate the released SEPO checkpoints \citep{zekri2025sepo}. Each model is evaluated on $640$ sequences for each of three sampling seeds. Appendix~\ref{app:dna-baselines} compares our SEPO ATAC-Acc measurements with the published values. The rows for D$^2$-DPO \citep{borso2025d2dpo}, VRPO \citep{zhu2025llada15},
DDPP-IS \citep{rectorbrooks2025ddpp}, diffu-GRPO \citep{zhao2025d1} and SDPO, marked $^\ast$, are
reported by \citet{han2026sdpo}.

\begin{table}[!htbp]
\centering
\caption{\textbf{Regulatory-DNA design.} Unmarked values are our measurements, mean (sample SD)
over three sampling seeds. $\dagger$ marks the released SEPO checkpoint scored with the HepG2 ATAC output (Appendix~\ref{app:dna-baselines}), and $\ast$ results
reported by \citet{han2026sdpo}, whose released evaluation samples in $128$ steps; GF is gradient flow. Bold is best and underline second best among
our measurements, excluding the reward-only ablation; $\ast$ rows and Log-Lik are unranked.}
\label{tab:dna-results}
\footnotesize
\setlength{\tabcolsep}{1.5pt}
\renewcommand{\arraystretch}{1.05}
\begin{tabularx}{\linewidth}{@{}l*{5}{C}c@{}}
\toprule
Method & \shortstack{Pred-Activity\\$\uparrow$} & \shortstack{ATAC-Acc\\HepG2 (\%) $\uparrow$} & \shortstack{$3$-mer\\Corr. $\uparrow$} & \shortstack{Log-Lik\\\phantom{$\uparrow$}} & \shortstack{Entropy\\(nats)} & Steps$\downarrow$ \\
\midrule
Reference & $0.18\,(0.02)$ & $2.0\,(0.4)$ & $-0.042\,(0.010)$ & $-262\,(0.1)$ & $1.340\,(0.001)$ & $128$ \\
CG & $1.56\,(0.02)$ & $0.0\,(0.0)$ & $-0.057\,(0.002)$ & $-269\,(0.2)$ & $1.332\,({<}0.001)$ & $128$ \\
SMC & $4.69\,(0.48)$ & $49.7\,(6.4)$ & $\underline{0.880}\,(0.014)$ & $-256\,(1.0)$ & $1.299\,(0.012)$ & $128$ \\
TDS & $4.54\,(0.55)$ & $33.4\,(10.8)$ & $0.822\,(0.130)$ & $-257\,(1.7)$ & $1.313\,(0.024)$ & $128$ \\
CFG & $5.02\,(0.08)$ & $91.7\,(0.9)$ & $0.743\,(0.004)$ & $-265\,(0.7)$ & $1.259\,(0.003)$ & $128$ \\
\midrule
DRAKES & $5.61\,(0.07)$ & $\underline{92.3}\,(0.6)$ & $\mathbf{0.887}\,(0.001)$ & $-264\,(0.3)$ & $1.293\,(0.002)$ & $128$ \\
DRAKES w/o KL & $6.41\,(0.03)$ & $82.8\,(0.9)$ & $0.307\,(0.010)$ & $-281\,(0.1)$ & $1.358\,({<}0.001)$ & $128$ \\
SEPO + GF$^\dagger$ & $\mathbf{7.66}\,(0.01)$ & $47.4\,(1.8)$ & $0.640\,(0.001)$ & $-248\,(0.6)$ & $1.177\,(0.003)$ & $128$ \\
\midrule
D$^2$-DPO$^\ast$ & $2.97\,(0.03)$ & $35.6\,(0.9)$ & $0.944\,(0.002)$ & $-252\,(0.4)$ & -- & $128$ \\
VRPO$^\ast$ & $4.60\,(0.01)$ & $15.8\,(0.2)$ & $0.838\,(0.002)$ & $-255\,(0.8)$ & -- & $128$ \\
DDPP-IS$^\ast$ & $4.07\,(0.02)$ & $50.0\,(0.3)$ & $0.711\,(0.001)$ & $-253\,(0.9)$ & -- & $128$ \\
diffu-GRPO$^\ast$ & $5.86\,(0.04)$ & $33.0\,(0.8)$ & $0.783\,(0.001)$ & $-245\,(0.4)$ & -- & $128$ \\
SDPO$^\ast$ & $6.30\,({<}0.01)$ & $94.8\,({<}0.1)$ & $0.900\,(0.003)$ & $-246\,(0.5)$ & -- & $128$ \\
\midrule
\idrfrow\multicolumn{7}{@{}l}{\textbf{IDRF} (ours)} \\
\idrfrow\quad ancestral & $7.22\,(0.01)$ & $\mathbf{95.3}\,(0.7)$ & $0.683\,(0.001)$ & $-257\,(0.1)$ & $1.255\,(0.001)$ & $8$ \\
\idrfrow\quad max-conf & $\underline{7.31}\,(0.03)$ & $\underline{92.3}\,(0.5)$ & $0.718\,(0.003)$ & $-253\,(0.3)$ & $1.243\,(0.002)$ & $8$ \\
\textcolor{hackred}{RLOO w/o $\LID$} & $7.68\,({<}0.01)$ & $37.8\,(5.5)$ & $0.378\,(0.004)$ & $-204\,(0.6)$ & $1.036\,({<}0.001)$ & $8$ \\
\textcolor{hackred}{RLOO w/o $\LID$} & $7.43\,({<}0.01)$ & $0.1\,(0.1)$ & $0.240\,({<}0.001)$ & $-163\,(0.1)$ & $1.072\,({<}0.001)$ & $128$ \\
\bottomrule
\end{tabularx}
\end{table}

\begin{figure}[!b]
  \centering
  \includegraphics[width=\linewidth]{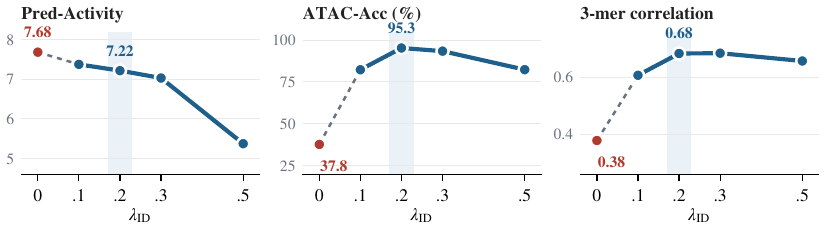}
  \caption{\textbf{Regularization sweep of IDRF (ours) on regulatory-DNA design} at $K=8$. Red marks
  reward-only fine-tuning ($\lambda_{\mathrm{ID}}=0$), and the shaded band the setting of
  Table~\ref{tab:dna-results}.}
  \label{fig:drakes-pareto}
\end{figure}

\paragraph{Accessible, high-activity sequences in eight steps.}
With eight steps, IDRF attains the top ATAC-Acc in Table~\ref{tab:dna-results} with ancestral
decoding ($95.3\%$, on par with $94.8\%$ for SDPO), and its max-confidence variant reaches the
second-highest activity ($7.31$). Every baseline samples in $128$ steps, $16\times$ more than IDRF.
The one method with higher activity, SEPO\,+\,GF ($7.66$), keeps only $47.4\%$ of its sequences
accessible, and SDPO reaches activity $6.30$.

\paragraph{Without inverse distillation, the oracle is exploited.}
Removing $\LID$ at the same eight steps raises activity to $7.68$ but keeps only $37.8\%$ ATAC-Acc
(against $95.3\%$), $3$-mer correlation $0.38$ (against $0.68$) and Entropy $1.04$, as the samples collapse into CA/CG-rich sequences almost entirely lacking T
(Appendix~\ref{app:dna-composition}). With the full
$128$-step sampler the collapse is complete ($0.1\%$ ATAC-Acc). The collapsed models receive higher estimated likelihoods ($-204$ and $-163$) than all other methods in our runs.

\paragraph{A controllable trade-off.}
Figure~\ref{fig:drakes-pareto} sweeps $\lambda_{\mathrm{ID}}$. Activity decreases
monotonically as the regularizer strengthens, while ATAC-Acc rises from $37.8\%$ to $95.3\%$ and
$3$-mer correlation from $0.38$ to $0.68$ by $\lambda_{\mathrm{ID}}=0.2$. A single weight selects
the operating point, and larger weights trade activity without further gains.

\subsection{CLIP-guided image generation}\label{sec:experiments-maskgit}
We fine-tune a class-conditional MaskGIT \citep{chang2022maskgit,besnier2025halton} pretrained on ImageNet with
CLIP image--text similarity \citep{radford2021clip} as the reward on ten ImageNet classes with
attribute prompts (Appendix~\ref{app:maskgit-prompts}), and hold out ImageReward
\citep{xu2023imagereward}, PickScore \citep{kirstain2023pickapic} and HPSv2.1 \citep{wu2023hpsv2}.
IDRF and its reward-only ablation both sample in $K=4$ steps, and each method is evaluated on
$1{,}000$ images. At four steps and without guidance, IDRF exceeds the $128$-step reference sampled with
classifier-free guidance ($w=3$) on every held-out metric, raising ImageReward from $-0.73$ to $0.88$
(Table~\ref{tab:maskgit-results}, Figure~\ref{fig:maskgit-compare}). Without $\LID$, the student reaches lower CLIP
($26.12$) and ImageReward ($-0.49$) and falls below the $128$-step reference on PickScore and HPSv2.1.
ReDGE \citep{gourevitch2026redge} reaches the highest CLIP by optimizing each image at inference,
yet stays below IDRF on all held-out metrics.

\begin{table}[!htbp]
\centering
\caption{\textbf{CLIP-guided MaskGIT generation}, $1{,}000$ images per method ($100$ per class),
mean with $95\%$ confidence interval over images. CLIP is optimized and the remaining metrics are
held out. $w$ is the classifier-free guidance weight, and rows without $w$, including IDRF, use no guidance ($w=0$). The Steps column reports the number of denoising steps. Four-step methods use an additional final unmasking step.
ReDGE adds $400$ reward-gradient updates per image.}
\label{tab:maskgit-results}
\small
\setlength{\tabcolsep}{3pt}
\resizebox{\linewidth}{!}{%
\begin{tabular}{lccccc}
\toprule
Method & Steps $\downarrow$ & CLIP $\uparrow$ & ImageReward $\uparrow$ & PickScore $\uparrow$ & HPSv2.1 ($\times100$) $\uparrow$ \\
\midrule
Reference & $4$ & $14.00\pm0.18$ & $-2.17\pm0.01$ & $18.24\pm0.03$ & $14.67\pm0.12$ \\
Reference, $w=3$ & $128$ & $23.11\pm0.13$ & $-0.73\pm0.05$ & $21.10\pm0.04$ & $25.05\pm0.12$ \\
ReDGE, $w=3$ & $20$ & $\mathbf{37.77}\pm0.10$ & $0.30\pm0.04$ & $21.38\pm0.03$ & $25.98\pm0.10$ \\
\textcolor{hackred}{RLOO w/o $\LID$} & $4$ & $26.12\pm0.10$ & $-0.49\pm0.03$ & $20.61\pm0.02$ & $22.57\pm0.08$ \\
\midrule
\idrfrow\textbf{IDRF} (ours) & $4$ & $27.74\pm0.06$ & $\mathbf{0.88}\pm0.02$ & $\mathbf{22.11}\pm0.02$ & $\mathbf{27.72}\pm0.04$ \\
\bottomrule
\end{tabular}%
}
\end{table}

\begin{figure}[!htbp]
  \centering
  \includegraphics[width=\linewidth]{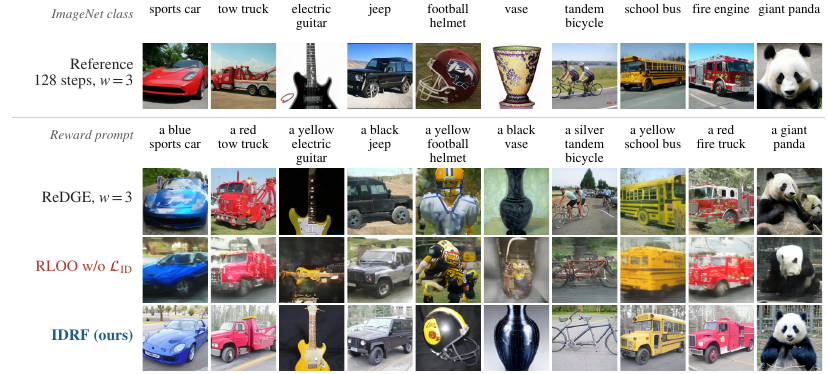}
  \caption{\textbf{CLIP-guided MaskGIT samples}, the first sample of each class for each method.
  Original ImageNet classes appear above the reference row, and reward prompts above the three
  guided rows. The reference is conditioned on the class alone. ReDGE optimizes the prompt reward
  at inference, while RLOO and IDRF (ours) are fine-tuned on it. IDRF uses four sampling steps plus cleanup.}
  \label{fig:maskgit-compare}
\end{figure}

\FloatBarrier
\subsection{Sentiment steering of review text}\label{sec:experiments-sentiment}
Text adds a failure mode that fixed-length outputs lack, because the model decides when to stop.
We fine-tune an MDLM \citep{sahoo2024mdlm}, trained on Amazon Fashion reviews
\citep{hou2024amazon} with an explicit EOS token, to maximize $\log P(1\,\text{star})$ under a
frozen BERT review
classifier (Appendix~\ref{app:amazon-setup}).
We hold out GPT-2-large perplexity \citep{radford2019gpt2} (\textbf{GenPPL}), the mean
\textbf{length} before the first EOS, the fraction of samples that emit \textbf{EOS}, the
token entropy of each sample divided by the log of its length (\textbf{Norm.\ entropy},
equal to $1$ when no token repeats), and \textbf{GM}, the gradient-moment distance of
\citet{hoogeboom2026dmmd} to $1{,}000$ held-out real one-star reviews (Appendix~\ref{app:amazon-gm}).

\begin{figure}[t]
\centering
\begin{minipage}[c]{0.33\linewidth}
  \centering
  \includegraphics[width=\linewidth]{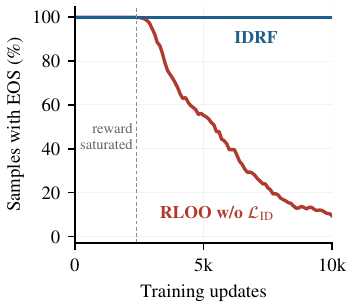}
\end{minipage}\hfill
\begin{minipage}[c]{0.645\linewidth}
\begin{tcolorbox}[samplebox, colframe=hackred, colbacktitle=hackred,
  title={RLOO w/o $\LID$\hfill no EOS, 127 of 127 tokens}]
Total junk this this trash is junk!!!!! lies junk not ever even 1 star buying Zero stars and garbage never buy this trash junk deserves 0 stars this junk no one ever buy this junk was junk!! 0 stars NEVER buy this junk 0 \textcolor{hackred}{[\dots]}
\samplesep{hackred}
Complete garbage garbage junk this deserve 0 stars this deserves a zero star I never ever I even allowed to give this zero star on this product. 0 stars 0 star this garbage buy this junk ever not ever ever buy this piece of \textcolor{hackred}{[\dots]}
\end{tcolorbox}\par\vspace{3pt}
\begin{tcolorbox}[samplebox, colframe=idrfblue, colbacktitle=idrfblue,
  title={IDRF (ours)\hfill 32 steps}]
This is absolutely junk!!! I purchased this brand and the first time I zipper it up the zipper broke! Please do not purchase this piece of junk! A waste of money!!!! I will never purchase from this company again!!!
\samplesep{idrfblue}
This is garbage! It is such horrible quality! I needed this floppy hat for a work event. It got folded in the washing machine and had dark scratches on the brim! Save your money! Don't waste more money on this!
\end{tcolorbox}
\end{minipage}
\caption{\textbf{Reward-only fine-tuning abandons the review format.} Left: fraction of samples emitting EOS during fine-tuning with $32$ sampling steps. Right: generated examples after $10{,}000$ updates, illustrating repetitive, unterminated RLOO outputs and complete IDRF reviews.}
\label{fig:amazon-eos}
\end{figure}

\paragraph{Reward-only fine-tuning abandons the review format.}
RLOO without the inverse-distillation regularizer achieves the highest reward but produces repetitive, unterminated text: $906$ of $1{,}000$ samples never emit EOS and reach the $127$-token limit (Figure~\ref{fig:amazon-eos}). Its normalized entropy drops to $0.72$, compared with $0.95$ for real reviews (Table~\ref{tab:amazon-results}). GenPPL alone does not capture this failure, assigning RLOO lower perplexity than the reference ($76.9$ versus $91.7$).

\paragraph{IDRF preserves the review format with $4$--$8\times$ fewer sampling steps.}
At both $16$ and $32$ steps, all evaluated IDRF samples emit EOS, with normalized entropy of $0.90$--$0.92$. At $32$ steps, IDRF attains a slightly lower reward than RLOO ($-0.044$ versus $-0.008$), but its GM distance to real one-star reviews is $2.6\times$ lower ($10.9$ versus $28.8$). It also achieves lower GenPPL than the $128$-step reference.

\begin{table}[!htbp]
\centering
\caption{\textbf{Sentiment steering on Amazon Fashion reviews.} $\log P(1\star)$ is optimized and the rest
held out. GM ($\pm$ s.e.) is against held-out real one-star reviews. Bold is the best fine-tuned model.}
\label{tab:amazon-results}
\footnotesize
\setlength{\tabcolsep}{1pt}
\renewcommand{\arraystretch}{1.0}
\begin{tabularx}{\linewidth}{@{}lc*{6}{C}@{}}
\toprule
Method & Steps$\downarrow$ & $\log P(1\star)\uparrow$ & GenPPL$\downarrow$ & Length & EOS\,(\%)$\uparrow$ & Norm.\ ent.$\uparrow$ & GM$\downarrow$ \\
\midrule
Real one-star & -- & $-0.950$ & $53.3$ & $32.2$ & $100.0$ & $0.948$ & $0.03\,(0.26)$ \\
MDLM reference & $128$ & $-5.070$ & $91.7$ & $27.8$ & $100.0$ & $0.955$ & $2.45\,(0.21)$ \\
\midrule
\textcolor{hackred}{RLOO w/o $\LID$} & $32$ & $\mathbf{-0.008}$ & $76.9$ & $126.8$ & $9.4$ & $0.719$ & $28.81\,(0.45)$ \\
\idrfrow\textbf{IDRF} (ours) & $32$ & $-0.044$ & $\mathbf{65.4}$ & $44.1$ & $\mathbf{100.0}$ & $0.902$ & $\mathbf{10.86}\,(0.50)$ \\
\idrfrow\textbf{IDRF} (ours) & $16$ & $-0.031$ & $107.3$ & $37.4$ & $\mathbf{100.0}$ & $\mathbf{0.922}$ & $11.44\,(0.46)$ \\
\bottomrule
\end{tabularx}
\end{table}

\FloatBarrier
\section{Discussion and Limitations}\label{sec:conclusion}
We introduced IDRF, a framework for reward fine-tuning of few-step masked discrete diffusion models using inverse-distillation regularization. Across DNA, image, and text generation, IDRF achieves high reward with fewer sampling steps while mitigating reward hacking. Future work includes scaling IDRF to larger models and extending it to additional domains.

\paragraph{Limitations.}
In practice, IDRF involves an adversarial min--max game between the student and the auxiliary denoiser. As in other adversarial training methods, this can introduce optimization instability and sensitivity to the balance between their updates. The auxiliary also adds training cost. Our practical surrogate admits a related KL guarantee for a mixture of one-step completion distributions, while controlling the final sampling distribution remains an open question.

\section*{AI use statement}
AI tools assisted with language editing, code implementation,  computational checks, and
mathematical analysis during preparation. The authors are responsible for the claims, results,
and final manuscript.

\section*{Ethics statement}
This work develops a method for reward fine-tuning of masked diffusion models and involves no human
subjects. All experiments use public datasets and pretrained models: the enhancer activity data of
\citet{gosai2024machine}, the ImageNet-trained MaskGIT \citep{chang2022maskgit}, and the Amazon
Reviews 2023 corpus \citep{hou2024amazon}. We do not redistribute review text. The DNA sequences
are scored only by predictive models and were not synthesized or tested experimentally. Reward
fine-tuning is dual-use. In particular, steering text toward a target sentiment could be misused to
produce deceptive reviews at scale, and our sentiment experiment serves only as a benchmark for
reward hacking.

\section*{Reproducibility statement}
Appendix~\ref{app:sequence-kl}
gives complete proofs of the theoretical results. Appendices~\ref{app:experimental-details},
\ref{app:maskgit-setup} and \ref{app:amazon-setup} list the models, data processing, hyperparameters,
sampling settings, evaluation metrics and seeds, and Appendix~\ref{app:dna-baselines} documents the
provenance of every baseline number.

\begingroup
\raggedright
\bibliographystyle{iclr2027_conference}
\bibliography{main}
\endgroup

\clearpage
\appendix
\section{Sequence-level inverse-distillation bound}\label{app:sequence-kl}
We prove Proposition~\ref{prop:sequence-kl} by expressing masked denoising as autoregressive
prediction along a random reveal order. The student enters only through its distribution
$\pth$ of completed sequences, in which its generation procedure, including any intermediate states
or trajectories, is marginalized. The masking used below is drawn independently after a clean
sample is obtained, so it describes the population denoising loss while leaving the student free
to use any sampler. As in Proposition~\ref{prop:sequence-kl}, the auxiliary is optimal.

\begin{proof}[{\normalfont\bfseries Proof of Proposition~\ref{prop:sequence-kl}}]
\leavevmode\par\nobreak
Each masked position in continuous-time SUBS is revealed at total rate $w_t$, because the
denoiser's categorical probabilities sum to one. We realize these events with independent
clocks $T_\ell$, $\Pr(T_\ell\le t)=1-\alpha_t$, whose reverse-time hazard is
$(-\alpha'_t)/(1-\alpha_t)=w_t$. The clocks are almost surely distinct. Let $\sigma$ be their
order of revelation, which is uniform over permutations by exchangeability \citep{shi2024md4}. Because the
denoiser takes no time input, the generated values depend on the order of the events and not
on their times.

For a clean sequence $x$, let $z_k(x,\sigma)$ retain the values at positions
$\sigma(1),\ldots,\sigma(k)$ and mask the rest, so $z_0=\mask^L$ and $z_L=x$. Revealing
positions in order $\sigma$ and drawing each from $f$ defines
\begin{equation}\label{eq:jump-sequence-law}
 \pi_{f,\sigma}(x)=\prod_{k=1}^L f^{\sigma(k)}\bigl(x^{\sigma(k)}\mid z_{k-1}(x,\sigma)\bigr),
 \qquad
 p_f(x)=\E_{\sigma}\,\pi_{f,\sigma}(x),
\end{equation}
with $\sigma$ uniform. The second expression is the continuous-time endpoint law, and in
particular $p_{\fref}=\pref$. Dependence between generated tokens is captured by the growing
visible context in each factor.

The NELBO in Eq.~\eqref{eq:mdlm} is the expected log-loss of these draws. Conditioned on
$T_\ell=t$, every other position is independently visible with probability $\alpha_t$ just before
position $\ell$ is revealed, which is exactly the forward-masked context at time $t$ with $\ell$
masked. The event density $-\alpha'_t=w_t(1-\alpha_t)$ equals the NELBO weight
times the probability that $\ell$ is masked. Summing over positions gives the any-order
autoregressive form of the NELBO \citep{ou2025radd},
\begin{equation}\label{eq:jump-nelbo}
 \Lmdlm(f,p)
 =\E_{x\sim p,\,\sigma}\Bigl[\sum_{k=1}^L-\log f^{\sigma(k)}\bigl(x^{\sigma(k)}\mid z_{k-1}(x,\sigma)\bigr)\Bigr]
 =\E_{x\sim p,\,\sigma}\bigl[-\log\pi_{f,\sigma}(x)\bigr],
\end{equation}
with $x$ and $\sigma$ independent. The schedule drops out, since time serves only as an
integration variable. The cross-entropy decomposition then yields
\begin{equation}\label{eq:jump-cross-entropy}
 \Lmdlm(f,\pth)=H(\pth)+\E_{\sigma}\KL{\pth}{\pi_{f,\sigma}}.
\end{equation}

Let $V(z)$ be the visible positions of a masked state $z$. All clean sequences agreeing with
$z$ produce this mask pattern with the same probability, which cancels in Bayes' rule, so the
optimal auxiliary prediction at a masked position $\ell$ is
\begin{equation}\label{eq:jump-bayes}
 f^{\mathrm B,\ell}_\theta(v\mid z)=\Pr_{x\sim\pth}\bigl(x^\ell=v\mid x^{V(z)}=z^{V(z)}\bigr).
\end{equation}
Along any reveal order its predictions are the chain-rule conditionals of $\pth$, so
$\pi_{f^{\mathrm B}_\theta,\sigma}=\pth$ for every $\sigma$. By the assumed expressiveness of the
auxiliary family, the minimum in Eq.~\eqref{eq:jump-cross-entropy} is therefore $H(\pth)$.
Subtracting it and applying convexity of KL in its second argument gives
\begin{equation}\label{eq:jump-kl-bound}
 \LID(\theta)=\E_{\sigma}\KL{\pth}{\pi_{\fref,\sigma}}
 \;\geq\;\KL{\pth}{\E_{\sigma}\pi_{\fref,\sigma}}=\KL{\pth}{\pref}.
\end{equation}
The finite reference NELBO ensures that the gap is finite and $\pth\ll\pref$. If the reference
is the Bayes denoiser of $\pref$, the same chain rule gives $\pi_{\fref,\sigma}=\pref$ for every
$\sigma$, and equality holds.
\end{proof}

The argument extends verbatim to time-conditioned denoisers by carrying the event times
$T_{\sigma(k)}$ alongside $\sigma$ through Eqs.~\eqref{eq:jump-sequence-law}--\eqref{eq:jump-kl-bound}.
Equation~\eqref{eq:jump-bayes} shows that the optimal auxiliary is time-agnostic in either case.

\subsection{The practical objective}\label{app:practical-bound}
Fix $\theta$, let $J\sim\mathcal U\{1,\dots,K\}$, and let $X\mid J=j$ follow the law
$\rho_{\theta,j}$ of the rollout state $x_{t_j}$ of the current student. For a state $x$ with
prediction $\widetilde x=G_\theta(x)$, one-hot at revealed positions, the one-step completion law and
its mixture over depths are
\begin{equation}\label{eq:completion-mixture}
Q_{\theta,x}(y)=\prod_{\ell=1}^L\widetilde x^\ell(y^\ell),\qquad
q_j=\E_{X\sim\rho_{\theta,j}}Q_{\theta,X},\qquad
\bar q_\theta=\frac1K\sum_{j=1}^Kq_j.
\end{equation}
The soft-target NELBO of Section~\ref{sec:reward-term} is
\begin{equation}\label{eq:soft-nelbo}
S(f,\widetilde x)=\E_{t,M,Y^V}\Bigl[w_t\sum_{\ell\in M}-\sum_{v\in\mathcal V}\widetilde x^\ell(v)\log f^\ell(v\mid Z)\Bigr],
\end{equation}
where $t\sim U(0,1)$, each position enters the masked set $M$ independently with probability
$1-\alpha_t$, the tokens $Y^\ell$ at the visible positions $\ell\in V$ are drawn independently from
$\widetilde x^\ell$, and $Z$ carries $Y^V$ on $V$ and $\mask$ on $M$.

\begin{propbox}
\begin{corollary}[Practical inverse-distillation bound]\label{cor:practical}
Assume the conditions of Proposition~\ref{prop:sequence-kl} with $\pth$ replaced by $\bar q_\theta$,
and let the auxiliary objective and the regularizer use the same law of $(J,X)$ and the same
corruption. Then, for every auxiliary with finite NELBO,
\begin{equation}\label{eq:practical-identity}
\E_{J,X}\bigl[S(\fref,G_\theta(X))-S(\fpsi,G_\theta(X))\bigr]
=\Lmdlm(\fref,\bar q_\theta)-\Lmdlm(\fpsi,\bar q_\theta).
\end{equation}
If the auxiliary minimizes its objective, this expectation equals the population loss of
$\bar q_\theta$ and satisfies
\begin{equation}\label{eq:practical-bound}
\Lmdlm(\fref,\bar q_\theta)-\min_\psi\Lmdlm(\fpsi,\bar q_\theta)\ \geq\ \KL{\bar q_\theta}{\pref}.
\end{equation}
For the sampler of Eq.~\eqref{eq:mdp}, $q_1=\pth$.
\end{corollary}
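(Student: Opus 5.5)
The plan is to reduce Corollary~\ref{cor:practical} to Proposition~\ref{prop:sequence-kl} applied to the law $\bar q_\theta$. The key observation is that the soft-target NELBO $S(f,\widetilde x)$ is linear in the product measure $Q_{\theta,x}$.

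\textbf{Step 1: identify $S$ as an ordinary NELBO.} Fix a state $x$ with prediction $\widetilde x=G_\theta(x)$, and let $Y\sim Q_{\theta,x}$. Under $Q_{\theta,x}$ the coordinates $Y^\ell$ are independent with laws $\widetilde x^\ell$. Forward-mask $Y$ with an independent mask set $M$ at a time $t$; the resulting state $Z$ has exactly the law used in Eq.~\eqref{eq:soft-nelbo}. Conditionally on $(t,M,Y^V)$, the masked tokens $Y^\ell$, $\ell\in M$, are still distributed as $\widetilde x^\ell$, by independence across positions. Taking the conditional expectation of $-\log f^\ell(Y^\ell\mid Z)$ therefore gives the soft cross-entropy $-\sum_v\widetilde x^\ell(v)\log f^\ell(v\mid Z)$. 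Hence
\[
S(f,\widetilde x)=\Lmdlm(f,Q_{\theta,x}).
\]

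\textbf{Step 2: average over rollout states.} $\Lmdlm(f,p)$ is an expectation over $x_0\sim p$, so it is linear in $p$. Averaging over $(J,X)$ gives
\[
\E_{J,X}\,S(f,G_\theta(X))=\Lmdlm(f,\bar q_\theta).
\]
This holds separately for $f=\fref$ and for $f=\fpsi$. Subtracting, which is legitimate because both quantities are finite (the reference by assumption, the auxiliary by hypothesis), yields Eq.~\eqref{eq:practical-identity}. Fubini/Tonelli justifies the exchanges, since all integrands are nonnegative.

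\textbf{Step 3: the bound.} The auxiliary objective uses the same law of $(J,X)$ and the same corruption. By Step 2, the auxiliary therefore minimizes exactly $\psi\mapsto\Lmdlm(\fpsi,\bar q_\theta)$. A minimizing auxiliary thus attains $\min_\psi\Lmdlm(\fpsi,\bar q_\theta)$, and the expectation equals the population loss $\LID$ for the law $\bar q_\theta$. Proposition~\ref{prop:sequence-kl} holds for any law on $\mathcal V^L$, and its hypotheses are assumed with $\pth$ replaced by $\bar q_\theta$. Applying it gives Eq.~\eqref{eq:practical-bound}.

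\textbf{Step 4: $q_1=\pth$.} For the sampler of Eq.~\eqref{eq:mdp}, $b_1=1$, so the last transition reveals every masked position independently from $G_\theta(x_{t_1})$ and copies the revealed ones. That is exactly $Q_{\theta,x_{t_1}}$, since $\widetilde x$ is one-hot at revealed positions. Marginalizing over $x_{t_1}\sim\rho_{\theta,1}$ gives the endpoint law $\pth$.

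The main care point is Step 1. The visible context must be drawn from $\widetilde x$ independently of the masked targets, with the conditioning argument relying on the product structure of $Q_{\theta,x}$. If the implementation instead correlated context and target, the identity would fail. The remaining steps are bookkeeping.
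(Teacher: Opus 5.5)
Your proposal is correct and follows the paper's proof essentially step for step. You identify $S(f,\widetilde x)$ with $\Lmdlm(f,Q_{\theta,x})$ using independence of positions under the product law, average linearly to get $\Lmdlm(f,\bar q_\theta)$, apply Proposition~\ref{prop:sequence-kl} to $\bar q_\theta$, and use $b_1=1$ to obtain $q_1=\pth$. Your explicit Tonelli and finiteness remarks justifying the subtraction are a small piece of rigor that the paper leaves implicit.
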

\end{propbox}

\begin{proof}[{\normalfont\bfseries Proof of Corollary~\ref{cor:practical}}]
\leavevmode\par\nobreak
Fix $J=j$ and $X=x$. Under $Q_{\theta,x}$ the positions are independent, and $Z$ depends on $Y$
only through $Y^V$. For $\ell\in M$, the token $Y^\ell$ is therefore independent of $(t,M,Y^V)$, and
$\E[-\log f^\ell(Y^\ell\mid Z)\mid t,M,Y^V]=-\sum_{v\in\mathcal V}\widetilde x^\ell(v)\log f^\ell(v\mid Z)$.
Hence $S(f,\widetilde x)=\Lmdlm(f,Q_{\theta,x})$. The NELBO is an expectation over the clean
sequence and is linear in its law, so averaging over $X$ and $J$ gives
$\E_{J,X}S(f,G_\theta(X))=\Lmdlm(f,\bar q_\theta)$. Applying this to $\fref$ and $\fpsi$ proves
Eq.~\eqref{eq:practical-identity}. By the same identity, the auxiliary objective equals
$\Lmdlm(\fpsi,\bar q_\theta)$, and its minimum over a family containing the Bayes denoiser of
$\bar q_\theta$ is attained there. Proposition~\ref{prop:sequence-kl} applied to the law
$\bar q_\theta$ gives Eq.~\eqref{eq:practical-bound}. Finally, $b_1=1$ in Eq.~\eqref{eq:mdp}, so the
last transition fills every remaining mask independently from $G_\theta^\ell(\cdot\mid x_{t_1})$ and
copies revealed tokens. The endpoint given $x_{t_1}$ therefore has law $Q_{\theta,x_{t_1}}$, and
averaging over $x_{t_1}$ gives $q_1=\pth$.
\end{proof}

The corollary characterizes the expected loss on rollout states drawn from the current student. Positions are independent given a rollout state, and the mixture $\bar q_\theta$ keeps the
dependence between tokens.

\section{Sampling and policy-gradient details}\label{app:policy-details}
The ancestral derivation uses a fixed schedule and a final transition to $t_0=0$.
If a numerical implementation stops at a positive noise floor, any remaining masks are
filled by a final categorical draw. Its sampled choices and log-probabilities must also
be included in the recorded trajectory and reward loss, and this extra evaluation counts
toward the sampling cost. For max-confidence sampling, masked positions are ranked
by their maximum predicted probability and revealed according to a fixed step budget.
Recomputing their probabilities with the recorded reveal sets held fixed gives the surrogate
described in Section~\ref{sec:reward-term}.

We use linear ancestral time grids ending at $10^{-5}$ for DNA and $10^{-3}$ for images and text. For MaskGIT, the reward loss uses four transitions sampled uniformly without replacement from the full recorded trajectory, including cleanup when present, with the sum scaled by the trajectory length divided by four. Inverse-distillation states exclude cleanup inputs for DNA and MaskGIT but include them for text. DNA max-confidence sampling reveals $25$ positions per step over eight steps, with categorical token sampling.

To verify the policy gradient, expand the path probability of the MDP in Eq.~\eqref{eq:mdp}.
Because $G_\theta$ assigns no mass to $\mask$, staying masked and being revealed are disjoint
events, and
$P_\theta(\tau)=c(\tau)\prod_{k=1}^K\prod_{\ell\in U_k}G_\theta^\ell(x_0^\ell\mid x_{t_k})$ with
$c(\tau)=\prod_{k=1}^K b_k^{|U_k|}(1-b_k)^{m_k-|U_k|}$, where $m_k$ is the number of masked
positions in $x_{t_k}$. The factor $c(\tau)$ is independent of $\theta$.
Differentiating $\sum_\tau P_\theta(\tau)R(x_0)$ gives, for any trajectory-independent baseline $b$,
\begin{equation}\label{eq:traj-logp}
\nabla_\theta J_R(\theta)
=\E_{\tau\sim P_\theta}\Bigl[(R(x_0)-b)\sum_{k=1}^K\sum_{\ell\in U_k}
 \nabla_\theta\log G_\theta^\ell(x_0^\ell\mid x_{t_k})\Bigr].
\end{equation}
The leave-one-out baseline is independent of trajectory $i$, conditional on the task,
so its expected product with that trajectory's score is zero. At
$\theta=\theta_{\mathrm{old}}$, every ratio $\rho_{ik\ell}$ in Eq.~\eqref{eq:reward-loss} equals one
and clipping is inactive, so differentiating Eq.~\eqref{eq:reward-loss} yields
the negative leave-one-out score-function estimate. After batch reuse, the clipped objective
acts as a surrogate for the reward gradient.

\section{Regulatory-DNA design}\label{app:experimental-details}

\subsection{Evaluation}\label{app:dna-metrics}
Every unmarked row of Table~\ref{tab:dna-results} evaluates one fixed checkpoint on $640$ sequences for each
of three sampling seeds and reports the mean and sample SD of the three per-seed values.
Pred-Activity is the median prediction of the held-out activity oracle. ATAC-Acc is the percentage
of sequences whose predicted HepG2 accessibility (output $1$ of the ATAC oracle) exceeds $0.5$.
$3$-mer correlation is the Pearson correlation between pooled $3$-mer counts of the samples and of
the $736$ enhancers of \citet{gosai2024machine} above the $99.9$th percentile of measured HepG2 activity. Log-Lik is the
median NELBO-based likelihood estimate of the pretrained model, with $128$ evaluation steps and one
Monte Carlo sample. Entropy averages the per-sequence nucleotide entropy
\begin{equation}\label{eq:dna-entropy}
 H_n=-\sum_{b\in\{\mathrm{A,C,G,T}\}}p_{n,b}\ln p_{n,b},\qquad 0\leq H_n\leq\ln4\approx1.386,
\end{equation}
where $p_{n,b}$ is the frequency of base $b$ in sequence $n$. The $16\times$ ratio in the main text
counts reverse steps, $128$ for every baseline and $8$ for IDRF.

\subsection{Training configuration}\label{app:dna-train}
All IDRF and reward-only runs start the student and the auxiliary from the pretrained DRAKES model, use its
log-linear noise schedule, and optimize the DRAKES fine-tuning oracle, while Pred-Activity uses the
held-out evaluation oracle. Each iteration takes $4$ auxiliary updates on on-policy states, then
draws one batch of rollouts and takes $2$ clipped student updates on it
(Table~\ref{tab:dna-hparams}). We report the final checkpoint. The reward-only run at $K=128$
accumulates gradients over $8$ batches of $32$ rollouts and forms the leave-one-out baseline within
each batch. The ancestral IDRF student is trained on one NVIDIA RTX 6000 GPU. For ancestral sampling, training uses categorical sampling for the final unmasking step. Evaluation follows the original DRAKES implementation and selects the most probable token at remaining masked positions.

\begin{table}[!htbp]
\centering
\caption{Training configuration for regulatory-DNA design.}
\label{tab:dna-hparams}
\vspace{3pt}
\small
\begin{tabular}{lccc}
\toprule
Setting & \multicolumn{2}{c}{IDRF (ours)} & RLOO w/o $\LID$ \\
\midrule
Sampler & ancestral & max-confidence & ancestral \\
Sampling steps $K$ & $8$ & $8$ & $8$ / $128$ \\
$\lambda_{\mathrm{ID}}$ & $0.2$ & $0.2$ & $0$ \\
Rollouts per iteration & $256$ & $256$ & $256$ / $256$ ($8\times32$) \\
Leave-one-out group & $256$ & $256$ & $256$ / $32$ \\
\midrule
Optimizer (student, auxiliary) & \multicolumn{3}{c}{AdamW, learning rate $10^{-4}$, weight decay $0$} \\
Gradient-norm clipping & \multicolumn{3}{c}{$1.0$} \\
Clipping $\varepsilon$ in Eq.~\eqref{eq:reward-loss} & \multicolumn{3}{c}{$0.2$} \\
Auxiliary / student updates per iteration & \multicolumn{3}{c}{$4$ / $2$} \\
Iterations & \multicolumn{3}{c}{$500$} \\
$\LID$ state & \multicolumn{3}{c}{$x_{t_j}$, $j\sim\mathcal U\{1,\dots,K\}$} \\
Training seed & \multicolumn{3}{c}{$0$} \\
\bottomrule
\end{tabular}
\end{table}

\subsection{Baseline provenance and evaluation discrepancies}\label{app:dna-baselines}
\paragraph{Baselines.}
Reference is the pretrained model without fine-tuning. Classifier guidance (CG, \citealp{nisonoff2024guidance})
and two sequential Monte Carlo methods \citep{wu2023twisted} steer it at inference, SMC with the
pretrained model as proposal and the twisted diffusion sampler (TDS) with CG as proposal. Classifier-free guidance
(CFG, \citealp{ho2022cfg}) samples a separately trained reward-conditioned model with guidance, as in DRAKES. D$^2$-DPO \citep{borso2025d2dpo} and
VRPO \citep{zhu2025llada15} adapt offline preference optimization to discrete diffusion. DRAKES
\citep{wang2024drakes} backpropagates the reward through the generated trajectory with
Gumbel-softmax relaxations. diffu-GRPO \citep{zhao2025d1} and SEPO \citep{zekri2025sepo} are policy
gradient methods, with SEPO\,+\,GF denoting its gradient flow (GF) variant. DDPP-IS \citep{rectorbrooks2025ddpp} matches the reward-tilted posterior with
importance sampling, and SDPO \citep{han2026sdpo} decomposes trajectory alignment into stepwise
objectives.

\paragraph{Provenance.}
We re-evaluated the DRAKES baselines using the authors' released code and configurations without modification.
Table~\ref{tab:dna-results} uses our common-protocol measurements for every unmarked
row. Table~\ref{tab:dna-reported-vs-measured} retains the published ATAC-Acc values for
transparency. The two columns have different provenance and are not treated as
measurements from the same run.

\begin{table}[!htbp]
\centering
\caption{\textbf{Published and independently measured ATAC-Acc (\%) $\uparrow$.}
Published values follow the cited papers. Our values use HepG2 output $1$ and
$640$ sequences per sampling seed. Parentheses give the reported SD for the papers
and sample SD across sampling seeds for our measurements.}
\label{tab:dna-reported-vs-measured}
\vspace{3pt}
\small
\setlength{\tabcolsep}{6pt}
\begin{tabular*}{\linewidth}{@{\extracolsep{\fill}}lccc@{}}
\toprule
Method & Published & Our measurement & Evaluated model \\
\midrule
SEPO & $99.5\,(0.2)$ & $8.70\,(1.49)$ & Released checkpoint \\
SEPO + GF & $99.9\,(0.09)$ & $47.40\,(1.82)$ & Released checkpoint \\
\bottomrule
\end{tabular*}
\end{table}

\begin{samepage}
\paragraph{ATAC output selection in SEPO.}
The SEPO paper defines ATAC-Acc for HepG2 \citep{zekri2025sepo}, but its released
\href{https://github.com/ozekri/SEPO/blob/4a4a40fef140002f6c819e09bc2c6a8db8d53e25/GRPO_MDLM_DNA/eval_plots.ipynb}{\texttt{eval\_plots.ipynb}}
selects output $0$ in the ATAC plot and threshold calculations.
The shared oracle's checkpoint metadata identifies output $0$ as GM12878 and output
$1$ as HepG2. The original
\href{https://github.com/ChenyuWang-Monica/DRAKES/blob/2d030ee33372aee8a81663aa0d9d8544c2458a27/drakes_dna/eval.ipynb}{DRAKES evaluator}
uses output $1$, as does the separate \texttt{eval.ipynb} in SEPO's repository.
We use HepG2 output $1$ for every method. The released SEPO and SEPO + GF checkpoints
then yield $8.70\%$ and $47.40\%$, respectively, under our sampling protocol.
Table~\ref{tab:dna-results} reports only SEPO + GF, which scores higher on Pred-Activity,
ATAC-Acc, $3$-mer correlation and Log-Lik alike. Plain SEPO scores $7.60\;(0.01)$ Pred-Activity, $8.7\;(1.5)$ ATAC-Acc,
$0.461\;(0.002)$ $3$-mer correlation, $-252\;(0.1)$ Log-Lik and $1.185\;(0.003)$ nats of
entropy at $128$ steps.
This identifies a mismatch in the public plotting/evaluation notebook. It does not
establish the execution history of every published result, nor explain differences
in Activity, $3$-mer correlation, or Log-Lik.
\par\end{samepage}

\subsection{On-policy states}\label{app:dna-ablations}
Table~\ref{tab:dna-state} replaces rollout states with forward-masked completed samples, masking the
student's sample $x_0$ at the same depth $t_j$; the soft target is the student's prediction at the
masked state. For ancestral decoding both give the same law of mask patterns and differ in how
visible values relate to the rest. On a rollout, the visible tokens were drawn first and the student
completes them. After forward masking, the visible tokens come from a finished sample. The resulting states differ from those encountered during generation. For max-confidence decoding the mask patterns also differ,
because the router reveals confident positions first. Under both samplers, ATAC-Acc and $3$-mer
correlation drop sharply.

\begin{table}[!htbp]
\centering
\caption{State on which $\LID$ is evaluated in IDRF (ours), regulatory-DNA design at $K=8$.}
\label{tab:dna-state}
\vspace{3pt}
\small
\begin{tabular}{llcccc}
\toprule
Sampler & $\LID$ state & Pred-Activity $\uparrow$ & ATAC-Acc (\%) $\uparrow$ & $3$-mer Corr.\ $\uparrow$ & Log-Lik \\
\midrule
Ancestral & on-policy $x_{t_j}$ & $7.22$ & $\mathbf{95.3}$ & $\mathbf{0.683}$ & $-257$ \\
Ancestral & forward-masked $x_0$ & $5.89$ & $52.5$ & $0.273$ & $-261$ \\
\midrule
Max-conf & on-policy $x_{t_j}$ & $7.31$ & $\mathbf{92.3}$ & $\mathbf{0.718}$ & $-253$ \\
Max-conf & forward-masked $x_0$ & $6.23$ & $42.8$ & $0.408$ & $-224$ \\
\bottomrule
\end{tabular}
\end{table}

\subsection{Compositional degeneration under reward-only fine-tuning}\label{app:dna-composition}
Table~\ref{tab:dna-composition} reports the nucleotide composition of the sequences generated by the
ancestral IDRF student and the two reward-only ablations of Table~\ref{tab:dna-results}. Each method
contributes $1{,}920$ sequences of $200$ nucleotides ($640$ per sampling seed), saved in FASTA
format, a plain-text format with one header line and one nucleotide string per sequence.
Dinucleotide frequencies count adjacent pairs within each sequence.

\begin{table}[!htbp]
\centering
\small
\setlength{\tabcolsep}{4pt}
\caption{\textbf{Reward-only fine-tuning nearly eliminates thymine.} Nucleotide and dinucleotide
composition in percent. The pair column is the combined frequency of CG, GC, CA and AC among all
dinucleotides, and T-free is the share of sequences without T.}
\label{tab:dna-composition}
\begin{tabular}{lrrrrrr}
\toprule
Method & A & C & G & T & \shortstack{CG+GC\\+CA+AC} & T-free \\
\midrule
\idrfrow IDRF, $K=8$ & 11.70 & 30.99 & 42.64 & 14.67 & 44.46 & 0.00 \\
RLOO w/o $\LID$, $K=8$ & 19.26 & 47.38 & 33.36 & ${<}0.01$ & 72.66 & 99.32 \\
RLOO w/o $\LID$, $K=128$ & 25.39 & 43.79 & 30.82 & ${<}0.01$ & 79.61 & 99.74 \\
\bottomrule
\end{tabular}
\end{table}

Reward-only fine-tuning leaves only $13$ T bases at $K=8$ and $5$ at $K=128$ across $1{,}920$
sequences, against $56{,}344$ for IDRF, so $99.3\%$ and $99.7\%$ of its sequences contain no T. The
CA/CG-related pairs rise from $44.5\%$ of all dinucleotides under IDRF to $72.7\%$ and $79.6\%$.
Figure~\ref{fig:dna-composition-samples} shows the first two sequences of sampling seed $0$ for each
method, without selection.

\begin{figure}[!htbp]
\centering
\begin{tcolorbox}[samplebox, colframe=idrfblue, colbacktitle=idrfblue, title={IDRF (ancestral)\hfill $K=8$}, fontupper=\fontsize{8}{10}\selectfont\ttfamily]
{\normalfont\scriptsize Sequence 1}\par
AGGAACTTCGCTGAGGCGGGCGGGGGGCGGGACGCGCGCGCCTCGCGAGGGCGGCGCAATTGGGGCGTGGGTGCGCGGCG\\CTGACGACGTCACTGCAGCGGCGCTGTGGCGCCGGCGGCGCGGGGCGCGCTTCGCGACGTCATGTCACGACGGCGTCGCG\\ACGTGCGTCTGACTGGGTTAGTCATCGCCCGCTGCGCGTG
\samplesep{idrfblue}
{\normalfont\scriptsize Sequence 2}\par
GAGGGCTCGGCGCATGGGCGTGTACTTTGCGGGGCGGAGCGTCGCGGCGTGCGCGCACGCGCGCGCGGGGCGCGTCACTT\\TCGTGTGCGGGCTGAGGTGGGTCACTGCCTAAGGCGGCGGGGGGGGGGGGCGGAGCTTACGTCGTTGTCGGTTACGTCAC\\GCGGCGCCGTCATCTGACGCGCGCCGGCGCCGTCCAGGTG
\end{tcolorbox}\par\vspace{4pt}
\begin{tcolorbox}[samplebox, colframe=hackred, colbacktitle=hackred, title={RLOO w/o $\LID$\hfill $K=8$}, fontupper=\fontsize{8}{10}\selectfont\ttfamily]
{\normalfont\scriptsize Sequence 1}\par
GGGCGGCCGGCGCCCCGGGCCGCCGGCGGAGACGCGGACGCGCAGACGCAGACGCGCACGCACGCACGCACGCACGCACG\\CACGCACGCCCGCACGCACCCACCCACGCACGCGCACGCGCGCACGCACCGACGCACGCACCGACGCCGCAGACGCAGAC\\ACACGCAGAGGCAGACGCAGAGGCGCAGCCGCCCCCCCCC
\samplesep{hackred}
{\normalfont\scriptsize Sequence 2}\par
GGGCGCGGGCGCGGCCGCCGCCGGGGCGGCGGCAGGCGCACGCGCAGACGCAGACGCAGACGCGCCCGCACGCACGCACG\\CACGCACGCACCCACGCAGGGCACGCACGCACCCACGCACGCACGCACGCACGCACGCACGCACACACGCACGCACGCAC\\GGACGCAGAGACGACGCAGACGCACACGCGCCCCCCCCCC
\end{tcolorbox}\par\vspace{4pt}
\begin{tcolorbox}[samplebox, colframe=hackred, colbacktitle=hackred, title={RLOO w/o $\LID$\hfill $K=128$}, fontupper=\fontsize{8}{10}\selectfont\ttfamily]
{\normalfont\scriptsize Sequence 1}\par
GGGGGAAGGGGGAAGGCCCCGGGCCGGGCCCGGGCCGGGGCCGCCGCGAGGCAGACGCACAGACGCAGACGCGCACGCGC\\ACACGCGCACGCACGCACGCACACACGCACACGCACACGCGCACGCACGCACGCACGCACGCACGCACGCGCACGCACGC\\ACGCACACACGCACACACACACACACACACACACACACAC
\samplesep{hackred}
{\normalfont\scriptsize Sequence 2}\par
GGGGGAGGGGGGAAAGCGCCGCCGCGGCGGGCCGGCGGCCGCGACGCCGACGCAGACGCAGAGACGCACACGCACGCACG\\CGCACGCACGCACGCACGCACGCACGCACGCACGCACGCACACGCACGCACGCACGCACGCACGCACGCACGCACGCACA\\CGCACGCACGCACGCACACACACACACACACACACACACA
\end{tcolorbox}\par\vspace{4pt}
\caption{\textbf{Unfiltered DNA outputs}, the first two sequences of sampling seed $0$ for each
method. Line breaks are for display only.}
\label{fig:dna-composition-samples}
\end{figure}

\section{CLIP-guided MaskGIT}\label{app:maskgit-setup}\label{app:maskgit-prompts}
\paragraph{Model and rewards.}
The reference is the class-conditional Halton-MaskGIT-L transformer \citep{besnier2025halton}, a MaskGIT
\citep{chang2022maskgit} pretrained on ImageNet at $384\times384$ resolution, which generates a $24\times24$ grid of VQ tokens
from a codebook of $16{,}384$ entries. The student and the auxiliary start from it, and the reference
and the VQ decoder stay frozen. The model receives only the ImageNet class, and the attribute prompt
of Table~\ref{tab:maskgit-prompts} enters only through the reward, $100\times$ the CLIPScore of a
CLIP ViT-L/14 \citep{radford2021clip}. ImageReward, PickScore v1 and HPSv2.1 are never optimized.

\paragraph{Training.}
Table~\ref{tab:maskgit-hparams} lists the configuration. Training uses one NVIDIA RTX 6000 GPU. Each batch holds $16$ groups of $4$
rollouts, each group for one class drawn uniformly. The student samples with $4$ ancestral steps on
a linear grid, followed by one cleanup call that fills any remaining masks. The reward loss uses
$4$ transitions drawn from each recorded trajectory, rescaled to the full trajectory, and each
rollout batch serves a single student update, so every ratio in Eq.~\eqref{eq:reward-loss} equals
one. The reward-only ablation uses the same configuration with $\lambda_{\mathrm{ID}}=0$.

\begin{table}[!htbp]
\centering
\caption{Training configuration for IDRF (ours) on CLIP-guided MaskGIT.}
\label{tab:maskgit-hparams}
\vspace{3pt}
\small
\begin{tabular}{lc}
\toprule
Setting & Value \\
\midrule
Sampling steps $K$ & $4$ ancestral steps and one cleanup call \\
$\lambda_{\mathrm{ID}}$ & $0.5$ \\
Optimizer (student, auxiliary) & AdamW, learning rate $3\times10^{-5}$, weight decay $0.01$ \\
Rollouts per update & $64$ ($16$ groups of $4$) \\
Auxiliary / student updates per iteration & $1$ / $1$ \\
Iterations & $80$ \\
Guidance weight during training & $w=0$ \\
$\LID$ state & $x_{t_j}$, $j\sim\mathcal U\{1,\dots,K\}$ \\
Precision & bfloat16 transformer, float32 logits \\
Training seed & $20$ \\
\bottomrule
\end{tabular}
\end{table}

\paragraph{Evaluation.}
Each method generates $100$ images for each of the ten classes. Scores are averaged within each
class and then across classes, and the $95\%$ confidence interval pools the within-class
variances with a Student-$t$ quantile. Classifier-free guidance combines logits as
$(1+w)\,\ell_{\mathrm{cond}}-w\,\ell_{\mathrm{uncond}}$. The $128$-step reference uses the Halton
sampler \citep{besnier2025halton} at temperature $1$ with $w=3$. ReDGE \citep{gourevitch2026redge} uses the ReMDM sampler \citep{wang2025remdm} with
$20$ network calls and $20$ logit updates per step ($400$ in total, learning rate $0.5$, KL weight
$1$), the same $100\times$ CLIPScore reward computed per image, and $w=3$.

\begin{table}[!htbp]
\centering
\caption{ImageNet classes and CLIP prompts used in the MaskGIT experiments.}
\label{tab:maskgit-prompts}
\vspace{3pt}
\small
\setlength{\tabcolsep}{5pt}
\begin{tabular}{rll}
\toprule
ImageNet ID & Class & CLIP prompt \\
\midrule
817 & Sports car & a blue sports car \\
864 & Tow truck & a red tow truck \\
546 & Electric guitar & a yellow electric guitar \\
609 & Jeep & a black jeep \\
560 & Football helmet & a yellow football helmet \\
883 & Vase & a black vase \\
444 & Tandem bicycle & a silver tandem bicycle \\
779 & School bus & a yellow school bus \\
555 & Fire engine & a red fire truck \\
388 & Giant panda & a giant panda \\
\bottomrule
\end{tabular}
\end{table}

\section{Sentiment steering of review text}\label{app:amazon-setup}
\paragraph{Data and SFT.}
We use the Amazon Fashion split of the Amazon Reviews 2023 corpus \citep{hou2024amazon}.
We remove duplicate reviews after trimming leading and trailing whitespace. Each training sequence contains a single review.
We retain nonempty reviews with at most $126$ GPT-2 tokens and reserve $1\%$ for validation using a deterministic hash-based split, without truncation or sentiment filtering. Each training sequence starts with $\textsc{bos}$, followed by the review text, and is padded with $\textsc{eos}$ tokens to length $128$. All $127$ non-BOS positions are noised and supervised. The reward is the log-probability of the one-star class under the frozen
classifier \url{https://huggingface.co/LiYuan/amazon-review-sentiment-analysis}. Starting from
the MDLM OpenWebText checkpoint
\citep{sahoo2024mdlm} with the GPT-2 tokenizer, we run EOS-aware SFT for $50{,}000$ steps at
learning rate $3\times10^{-4}$ and take the EMA weights. SFT uses Adam with global batch size $512$, $2{,}500$ warmup steps, EMA decay $0.99$, and seed $42$. This single checkpoint initializes
the frozen reference, the student and the auxiliary denoiser for every reward fine-tuning run
reported in Table~\ref{tab:amazon-results}.

\paragraph{Reward fine-tuning.}
Rollouts use ancestral sampling followed by a stochastic categorical terminal step, so
$K$ reverse updates cost $K+1$ denoiser evaluations. The score-function estimator of
Eq.~\eqref{eq:traj-logp} is applied to the full trajectory without subsampling reverse steps.
Each rollout batch receives a single on-policy update, so $\theta=\theta_{\mathrm{old}}$ and
the clipping in Eq.~\eqref{eq:reward-loss} is inactive here. Groups contain $8$ rollouts per
device, for a global batch of $64$ across $8$ NVIDIA A100 GPUs. Both models are trained with Adam at a learning rate of $10^{-6}$ with $100$ warmup steps. The auxiliary is updated once per student update. Evaluation runs every $100$ updates on $1{,}000$ fresh samples, and weights
are checkpointed every $500$ updates. The reward and GenPPL are read from the text up to the
first EOS, while the RLOO and inverse-distillation losses are computed on the full length-$128$
buffer, including the EOS tail. Every run starts from the same SFT checkpoint with training
seed $0$, and IDRF uses $\lambda_{\mathrm{ID}}=0.01$. IDRF16 is trained separately from IDRF32. All fine-tuned text models are evaluated after $10{,}000$ updates.

\paragraph{Evaluation.}
All generated rows of Table~\ref{tab:amazon-results}, including the reference, use the same
sampler, with ancestral updates followed by a stochastic categorical terminal step, evaluation
seed $1$ and $1{,}000$ samples. The real-review rows score $1{,}000$ held-out reviews that are disjoint
from the GM reference. GenPPL is the exponential of the token-averaged GPT-2-large NLL across all reviews, using BOS as context and including one EOS per review. \textbf{Norm.\ entropy} is the Shannon entropy of the GPT-2 token
frequencies within a sample divided by the log of its length, averaged over samples. BOS and EOS are excluded, and normalized entropy is set to zero for reviews containing fewer than two tokens. Repeated words account for a median of $66\%$ of word occurrences in RLOO reviews, $14\%$ in IDRF reviews, and $9\%$ in real one-star reviews. Word repetition is the fraction of word occurrences beyond their first occurrence, computed case-insensitively and summarized by the median across reviews.

\subsection{Gradient-moment diagnostic}\label{app:amazon-gm}
GM adapts the centered estimator of \citet[Eq.~14]{hoogeboom2026dmmd} to variable-length text.
For a candidate set and a reference set of $1{,}000$ reviews each, we compute the mean
gradient of GPT-2-large's token NLL on two disjoint subsets of $50$ reviews each from each set and
take the inner product of the two candidate-minus-reference differences. The reported value
averages $10$ such estimates, with one standard error across them. Each review contributes
its mean token NLL, conditioned on BOS and including one EOS. Gradients are computed in
float32 and inner products in float64. The estimator is unbiased for the squared distance
between mean gradients, so two samples of the same distribution score zero in expectation.

Table~\ref{tab:amazon-gm} uses two references from the held-out validation split. One holds
$1{,}000$ of the $2{,}748$ eligible real one-star reviews, and the other $1{,}000$ of the $21{,}448$
real reviews of every rating. Each reference is compared against a disjoint sample of the same
distribution, and both real-versus-real estimates ($0.03$ and $0.06$) are consistent with zero.
Against both references, IDRF at $32$ steps has a $2.4$--$2.7\times$ lower GM than reward-only fine-tuning.
These results suggest that inverse-distillation regularization reduces distributional drift under reward fine-tuning.

\begin{table}[!htbp]
\centering
\caption{Gradient moment against two held-out references (lower is closer), $\pm$ one
standard error over ten pair estimates. Generated sets are those of
Table~\ref{tab:amazon-results}.}
\label{tab:amazon-gm}
\vspace{3pt}
\small
\setlength{\tabcolsep}{6pt}
\begin{tabular}{lccc}
\toprule
Candidate set & Steps $\downarrow$ & GM vs.\ real one-star $\downarrow$ & GM vs.\ real, all ratings $\downarrow$ \\
\midrule
Real one-star reviews & -- & $0.03\pm0.26$ & $1.99\pm0.29$ \\
Real reviews, all ratings & -- & -- & $0.06\pm0.24$ \\
MDLM reference & $128$ & $2.45\pm0.21$ & $0.54\pm0.21$ \\
\midrule
RLOO w/o $\LID$ & $32$ & $28.81\pm0.45$ & $29.53\pm0.47$ \\
\idrfrow IDRF (ours) & $32$ & $10.86\pm0.50$ & $12.13\pm0.56$ \\
\idrfrow IDRF (ours) & $16$ & $11.44\pm0.46$ & $12.85\pm0.37$ \\
\bottomrule
\end{tabular}
\end{table}

\end{document}